\newtheorem{theorem}{Theorem}
\newtheorem*{theorem*}{Theorem}
\newtheorem{propo}{Proposition}
\newtheorem*{propo*}{Proposition}
\title{Coordinated Proximal Policy Optimization}
\author{
  Zifan Wu \\
  School of Computer Science and Engineering\\
  Sun Yat-sen University, Guangzhou, China \\
  \texttt{wuzf5@mail2.sysu.edu.cn} \\
   \And
   Chao Yu\thanks{Corresponding authors.}\\
  School of Computer Science and Engineering\\
  Sun Yat-sen University, Guangzhou, China \\
  \texttt{yuchao3@mail.sysu.edu.cn} \\
   \And
   Deheng Ye \\
  Tencent AI Lab, Shenzhen, China \\
  \texttt{dericye@tencent.com} \\
   \And
   Junge Zhang$^{*}$\\
  Institute of Automation \\
  Chinese Academy of Science, Beijing, China \\
  \texttt{jgzhang@nlpr.ia.ac.cn} \\
   \And
   Haiyin Piao \\
  School of Electronic and Information\\
  Northwestern Polytechnical University, Xian, China \\
  \texttt{haiyinpiao@mail.nwpu.edu.cn} \\
    \And
    Hankz Hankui Zhuo \\
    School of Computer Science and Engineering\\
  Sun Yat-sen University, Guangzhou, China\\
\texttt{zhuohank@mail.sysu.edu.cn}

}
\begin{document}

\maketitle

\begin{abstract}
We present Coordinated Proximal Policy Optimization (CoPPO), an algorithm that extends the original Proximal Policy Optimization (PPO) to the multi-agent setting. 
The key idea lies in the coordinated adaptation of step size during the policy update process among multiple agents. 
We prove the monotonicity of policy improvement when optimizing a theoretically-grounded joint objective, and derive a simplified optimization objective based on a set of approximations. 
We then interpret that such an objective in CoPPO can achieve dynamic credit assignment among agents, thereby alleviating the high variance issue during the concurrent update of agent policies. 
Finally, we demonstrate that CoPPO outperforms several strong baselines and is competitive with the latest multi-agent PPO method (i.e. MAPPO) under typical multi-agent settings, including cooperative matrix games and the StarCraft II micromanagement tasks. 
  
\end{abstract}

\section{Introduction}
Cooperative Multi-Agent Reinforcement Learning (CoMARL)  shows great promise for solving various real-world tasks, such as traffic light control~\citep{wu2020multi}, sensor network management~\citep{sharma2020distributed} and autonomous vehicle coordination~\citep{yu2019distributed}.
In such applications, a team of agents aim to maximize a joint expected utility through a single global reward. Since multiple agents coexist in a common environment and learn and adapt their behaviour concurrently, the arising non-stationary issue makes it difficult to design an efficient learning method~\citep{HernandezLeal2017ASO, papoudakis2019dealing}.
Recently, a number of CoMARL methods based on Centralized Training Decentralized Execution (CTDE)~\citep{Foerster2016LearningTC}  have been proposed, including policy-based ~\citep{ Lowe2017MultiAgentAF,foerster2018counterfactual, wang2020off, Yu2021TheSE} and value-based methods~\citep{Sunehag2018ValueDecompositionNF,Rashid2018QMIXMV,Son2019QTRANLT, Mahajan2019MAVENMV}. While generally having more stable convergence properties~\citep{gupta2017cooperative, song2019convergence, wang2020off} and being naturally suitable for problems with stochastic policies~\citep{deisenroth2013survey,su2020value}, policy-based methods still receive less attention from the community and generally possess inferior performance against value-based methods, as evidenced in the StarCraft II benchmark~\citep{Samvelyan2019TheSM}.

The performance discrepancy between policy-based and value-based methods can be largely attributed to the inadequate utilization of the centralized training procedure in the CTDE paradigm. Unlike value-based methods that directly optimize the policy via centralized training of value functions using extra global information, policy-based methods only utilize centralized value functions for state/action evaluation such that the policy can be updated to increase the likelihood of generating higher values~\citep{sutton1999policy}. In other words, there is an update lag between intermediate value functions and the final policy in policy-based methods, and merely coordinating over value functions is insufficient to guarantee satisfactory performance \citep{grondman2012survey, fujimoto2018addressing}.

To this end, we propose the \emph{Coordinated Proximal Policy Optimization} (CoPPO) algorithm, a multi-agent extension of PPO \citep{Schulman2017ProximalPO}, to directly coordinate over the agents' policies by dynamically adapting the step sizes during the agents' policy update processes. 
We first prove a relationship between a lower bound of joint policy performance and the update of policies. 
Based on this relationship, a monotonic joint policy improvement can be achieved through optimizing an ideal objective. 
To improve scalability and credit assignment, and to cope with the potential high variance due to non-stationarity, a series of transformations and approximations are then conducted to derive an implementable optimization objective in the final CoPPO algorithm. 
While originally aiming at monotonic joint policy improvement, CoPPO ultimately realizes a direct coordination over the policies at the level of each agent's policy update step size. 
Concretely, by taking other agents' policy update into consideration, CoPPO is able to achieve dynamic credit assignment that helps to indicate a proper update step size to each agent during the optimization procedure. 
In the empirical study, an extremely hard version of the \emph{penalty game} \citep{claus1998dynamics} is used to verify the efficacy and interpretability of CoPPO. 
In addition, the evaluation on the StarCraft II micromanagement benchmark further demonstrates the superior performance of CoPPO against several strong baselines.

The paper is organized as follows:
Section~\ref{Sec:Background} provides a background introduction. Section \ref{Coordinated Proximal Policy Optimization} introduces the derivation process of CoPPO. 
Section~\ref{sec:ex} presents the experimental studies, and Section~\ref{sec:related} reviews some related works. 
Finally, Section~\ref{sec:conclu} concludes the paper.

\section{Background}
\label{Sec:Background}

We model the fully cooperative MARL problem as a Dec-POMDP \citep{Oliehoek2016ACI} which is defined by a tuple $G=\langle N, S,  \Omega, O, A, R, P, \gamma\rangle$.
$N$ is the number of agents and $S$ is the set of true states of the environment.
Agent $i\in\{1, \ldots, N\}$ obtains its partial observation $o^i\in\Omega$ according to the observation function $O(s, i)$, where $s\in S$.
Each agent has an action-observation history $\tau^i\in T\equiv(\Omega\times A)^*$.
At each timestep, agent $i$ chooses an action $a^i\in A$ according to its policy $\pi(a^i|\tau^i)$, and we use $\boldsymbol{a}$ to denote the joint action $\{a^1,\ldots,a^N\}$.
The environment then returns the reward signal $R(s, \boldsymbol{a})$ that is shared by all agents, and shifts to the next state according to the transition function $P(s'|s, \boldsymbol{a})$.
The joint action-value function induced by a joint policy $\boldsymbol{\pi}$ is defined as: $Q^{\boldsymbol{\pi}}(s_t, \boldsymbol{a}_t) = \mathbb{E}_{s_{t+1:\infty}, \boldsymbol{a}_{t+1:\infty}}[\sum_{t'=0}^{\infty}\gamma^{t'}R_{t+t'}|s_t, \boldsymbol{a}_t]$, where $\gamma\in[0, 1)$ is the discounted factor.
We denote the joint action of agents other than agent $i$ as $\boldsymbol{a}^{-i}$, and $\boldsymbol{\pi}^{-i}$, $\boldsymbol{\tau}^{-i}$ follow a similar convention.
Joint policy $\boldsymbol{\pi}$ can be parameterized by $\theta=\{ \theta^1,\ldots,\theta^N\}$, where $\theta^i$ is the parameter set of agent $i$'s policy.
Our problem setting follows the CTDE paradigm \citep{Foerster2016LearningTC}, in which each agent executes its policy conditioned only on the partially observable information, but the policies can be trained centrally by using extra global information.

\paragraph{Value-based MARL}

In CTDE value-based methods such as~\citep{Sunehag2018ValueDecompositionNF, Rashid2018QMIXMV,Son2019QTRANLT, Mahajan2019MAVENMV}, an agent selects its action by performing an $\arg\max$ operation over the local action-value function, i.e. $\pi^i(\tau^i)=\mathop{\arg\max}_{a^i}Q^i(\tau^i, a^i)$.
Without loss in generality, the update rule for value-based methods can be formulated as follows:
\begin{equation}
    \label{eq:value-based}
    \Delta\theta^i\propto\mathbb{E}_{\boldsymbol{\pi}}\left[\left(R(s, \boldsymbol{a})+\max_{\boldsymbol{a}'}Q^{tot}(s, \boldsymbol{a}')-Q^{tot}(s,\boldsymbol{a}) \right)\frac{\partial Q^{tot}}{\partial Q^i}\nabla_{\theta^i}Q^i(\tau^i,a^i)\right],
\end{equation} 
where $\theta^i$ represents the parameter of $Q^i$, and $Q^{tot}$ is the global action-value function.
The centralized training procedure enables value-based methods to factorize $Q^{tot}$ into some local action-values.
Eq.~(\ref{eq:value-based}) is essentially Q-learning if rewriting $\frac{\partial Q^{tot}}{\partial Q^i}\nabla_{\theta^i}Q^i(\tau^i,a^i)$ as $\nabla_{\theta^i}Q^{tot}$.
The partial derivative is actually a credit assignment term that projects the step size of $Q^{tot}$ to that of $Q^i$~\citep{wang2020off}.

\paragraph{Policy-based MARL}

In CTDE policy-based methods such as~\citep{foerster2018counterfactual, Lowe2017MultiAgentAF,wang2020off, Yu2021TheSE}, an agent selects its action from an explicit policy $\pi^i(a^i|\tau^i)$.
The vanilla multi-agent policy gradient algorithm updates the policy using the following formula:
\begin{equation}
\label{eq:policy gradient-based}
    \Delta\theta^i\propto\mathbb{E}_{\boldsymbol{\pi}}\left[ Q^{tot}_{\boldsymbol{\pi}}(s,\boldsymbol{a})\nabla_{\theta^i}\pi^i(a^i|\tau^i)\right].
\end{equation}
In order to reduce the variance and address the credit assignment issue, COMA \citep{foerster2018counterfactual} replace $Q^{tot}$ with the counterfactual advantage:
$A^{\boldsymbol{\pi}}(s, \boldsymbol{a})=Q^{\boldsymbol{\pi}}(s, (a^i, \boldsymbol{a}^{-i}))-\mathbb{E}_{\hat{a}^i\sim\pi^i}[Q^{\boldsymbol{\pi}}(s, (\hat{a}^i, \boldsymbol{a}^{-i}))]$.
This implies that fixing the actions of other agents (i.e. $\boldsymbol{a}^{-i}$), an agent will evaluate the action it has actually taken (i.e. $a^i$) by comparing with the average effect of other actions it may have taken.

\section{Coordinated Proximal Policy Optimization}
\label{Coordinated Proximal Policy Optimization}

In policy-based methods, properly limiting the policy update step size is proven to be effective in single-agent settings \citep{Schulman2015TrustRP,Schulman2017ProximalPO}.
In cases when there are multiple policies, it is also crucial for each agent to take other agents' update into account when adjusting its own step size. 
Driven by this insight, we propose the CoPPO algorithm to adaptively adjust the step sizes during the update of the policies of multiple agents.

\subsection{Monotonic Joint Policy Improvement} \label{sec:Monotonic Joint Policy Improvement}

The performance of joint policy $\boldsymbol{\pi}$ is defined as: $J(\boldsymbol{\pi})\doteq\mathbb{E}_{\boldsymbol{a}\sim\boldsymbol{\pi}, s\sim\rho^{\boldsymbol{\pi}}}\left[\sum_{t=0}^\infty\gamma^tR_{t+1}(s, \boldsymbol{a}) \right],$
where $\rho^{\boldsymbol{\pi}}$ is the unnormalized discounted visitation frequencies when the joint actions are chosen from $\boldsymbol{\pi}$.
Then the difference between the performance of two joint policies, say $\boldsymbol{\pi}$ and $\tilde{\boldsymbol{\pi}}$, can be expressed as the accumulation of the global advantage over timesteps (see Appendix~\ref{sec:appA1} for proof):
\begin{equation}
\label{eq:perf_diff}
    J(\tilde{{\boldsymbol \pi}})-J({\boldsymbol \pi})=\mathbb{E}_{{\boldsymbol a}\sim\tilde{{\boldsymbol \pi}},  s\sim\rho^{\tilde{{\boldsymbol \pi}}}}\left[ A^{{\boldsymbol \pi}}(s, \boldsymbol{a}) \right],
\end{equation}
where $A^{{\boldsymbol \pi}}(s, \boldsymbol{a})=Q^{\boldsymbol{\pi}}(s, \boldsymbol{a})-V^{\boldsymbol{\pi}}(s)$ is the joint advantage function.
This equation indicates that if the joint policy $\boldsymbol{\pi}$ is updated to $\tilde{\boldsymbol{\pi}}$, then the performance will improve when the update increases the probability of taking "good" joint actions so that $\sum_{\boldsymbol{a}}\tilde{\boldsymbol{\pi}}(\boldsymbol{a}|s)A^{{\boldsymbol \pi}}(s, \boldsymbol{a})>0$ for every $s$.
Modeling the dependency of $\rho^{\tilde{{\boldsymbol \pi}}}$ on $\tilde{\boldsymbol{\pi}}$ involves the complex dynamics of the environment, so we extend the approach proposed in \citep{Kakade2002ApproximatelyOA} to derive an approximation of $J(\tilde{\boldsymbol{\pi}})$,  denoted as $\tilde{J}_{\boldsymbol{\pi}}(\boldsymbol{\tilde{\pi}})$:
\begin{align}
\label{eq:approx}
    \tilde{J}_{\boldsymbol{\pi}}(\boldsymbol{\tilde{\pi}})&\doteq J(\boldsymbol{\pi})+\mathbb{E}_{\boldsymbol{a}\sim\tilde{\boldsymbol{\pi}}, s\sim\rho^{\boldsymbol{\pi}}}\left[ A^{\boldsymbol{\pi}}(s, \boldsymbol{a}) \right].
\end{align}
Note that if the policy is differentiable, then $\tilde{J}_{\boldsymbol{\pi}}(\boldsymbol{\tilde{\pi}})$ matches $J(\tilde{\boldsymbol{\pi}})$ to first order (see Appendix~\ref{sec:firstorder} for proof).
Quantitatively, we measure the difference between two joint policies using the \emph{maximum total variation divergence} \citep{Schulman2015TrustRP}, which is defined by:
$D_{TV}^{\max}[\pi\|\tilde{\pi}]\doteq\max_sD_{TV}[\pi(\cdot|s)\|\tilde{\pi}(\cdot|s)], $
where $D_{TV}[\pi(\cdot|s)\|\tilde{\pi}(\cdot|s)]=\frac{1}{2}\int_{\mathcal{A}}|\pi(a|s)-\tilde{\pi}(a|s)|d a$
(the definition for the discrete case is simply replacing the integral with a summation, and our results remain valid in such case).
Using the above notations, we can derive the following theorem:
\begin{theorem}
    Let $\epsilon=\max_{s, \boldsymbol{a}}\left|A^{\boldsymbol{\pi}}(s, \boldsymbol{a})\right|,\alpha_i=\sqrt{\frac{1}{2}D_{TV}^{\max}[\pi^i||\tilde{\pi}^i]}, 1\leq i\leq N$, and $N$ be the total number of agents, then the error of the approximation in Eq.~(\ref{eq:approx}) can be explicitly bounded as follows:

    \begin{equation}
    \label{eq:bound}
        \left| J(\boldsymbol{\tilde{\pi}})-\tilde{J}_{\boldsymbol{\pi}}(\boldsymbol{\tilde{\pi}}) \right|\leq 4\epsilon\left[\frac{1-\gamma \prod_{i=1}^N (1-\alpha_i)}{1-\gamma}-1\right].
    \end{equation}
\end{theorem}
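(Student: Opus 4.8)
The plan is to follow the coupling argument behind the single-agent TRPO bound \citep{Kakade2002ApproximatelyOA, Schulman2015TrustRP}, letting the product structure of the joint policy $\boldsymbol{\pi}(\boldsymbol{a}|s)=\prod_{i=1}^N\pi^i(a^i|\tau^i)$ supply the only genuinely new ingredient. First I would combine the exact identity in Eq.~(\ref{eq:perf_diff}) with the definition in Eq.~(\ref{eq:approx}): writing $\bar{A}^{\boldsymbol{\pi}}(s)\doteq\mathbb{E}_{\boldsymbol{a}\sim\tilde{\boldsymbol{\pi}}(\cdot|s)}\!\left[A^{\boldsymbol{\pi}}(s,\boldsymbol{a})\right]$ and expanding the (unnormalized) discounted occupancies $\rho^{\tilde{\boldsymbol{\pi}}}$ and $\rho^{\boldsymbol{\pi}}$ into their per-timestep state marginals, one obtains
\begin{equation}
J(\tilde{\boldsymbol{\pi}})-\tilde{J}_{\boldsymbol{\pi}}(\tilde{\boldsymbol{\pi}})=\sum_{t=0}^{\infty}\gamma^t\Big(\mathbb{E}_{s_t\sim\tilde{\boldsymbol{\pi}}}\!\left[\bar{A}^{\boldsymbol{\pi}}(s_t)\right]-\mathbb{E}_{s_t\sim\boldsymbol{\pi}}\!\left[\bar{A}^{\boldsymbol{\pi}}(s_t)\right]\Big),
\end{equation}
which reduces the claim to controlling, uniformly in $t$, how far the state distribution induced by $\tilde{\boldsymbol{\pi}}$ drifts from that of $\boldsymbol{\pi}$, measured against $\bar{A}^{\boldsymbol{\pi}}$.

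Second I would bound $\bar{A}^{\boldsymbol{\pi}}$ uniformly over states. Because $\mathbb{E}_{\boldsymbol{a}\sim\boldsymbol{\pi}(\cdot|s)}[A^{\boldsymbol{\pi}}(s,\boldsymbol{a})]=0$ by definition of the joint advantage, we may write $\bar{A}^{\boldsymbol{\pi}}(s)=\sum_{\boldsymbol{a}}(\tilde{\boldsymbol{\pi}}(\boldsymbol{a}|s)-\boldsymbol{\pi}(\boldsymbol{a}|s))\,A^{\boldsymbol{\pi}}(s,\boldsymbol{a})$, so Hölder's inequality together with $|A^{\boldsymbol{\pi}}|\le\epsilon$ gives $|\bar{A}^{\boldsymbol{\pi}}(s)|\le 2\epsilon\,D_{TV}[\boldsymbol{\pi}(\cdot|s)\|\tilde{\boldsymbol{\pi}}(\cdot|s)]$. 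The multi-agent step is then to dominate this joint divergence by the per-agent ones: coupling each agent's two action draws maximally and running the $N$ couplings in parallel, all $N$ draws agree with probability at least $\prod_{i=1}^N(1-D_{TV}[\pi^i(\cdot|\tau^i)\|\tilde{\pi}^i(\cdot|\tau^i)])$, hence $D_{TV}[\boldsymbol{\pi}(\cdot|s)\|\tilde{\boldsymbol{\pi}}(\cdot|s)]\le 1-\prod_{i=1}^N\big(1-D_{TV}^{\max}[\pi^i\|\tilde{\pi}^i]\big)\le 1-\prod_{i=1}^N(1-\alpha_i)$. The last inequality is the point at which the per-agent radii $\alpha_i$ enter: the definition $\alpha_i=\sqrt{\tfrac12 D_{TV}^{\max}[\pi^i\|\tilde{\pi}^i]}$ is chosen so that $\prod_i(1-\alpha_i)$ lower-bounds the per-step joint-agreement probability — for $D_{TV}^{\max}[\pi^i\|\tilde{\pi}^i]\le\tfrac12$ one has $\alpha_i\ge D_{TV}^{\max}[\pi^i\|\tilde{\pi}^i]$ directly, and it also mirrors the Pinsker substitution $D_{TV}\le\sqrt{\tfrac12 D_{KL}}$ used in the single-agent bound. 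Thus $\max_s|\bar{A}^{\boldsymbol{\pi}}(s)|\le 2\epsilon\big(1-\prod_{i=1}^N(1-\alpha_i)\big)$.

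Third I would substitute into the telescoped identity. The $t=0$ summand vanishes, since the initial-state distribution does not depend on the policy; for each $t\ge 1$ the bracketed difference is at most $2\max_s|\bar{A}^{\boldsymbol{\pi}}(s)|\le 4\epsilon\big(1-\prod_{i=1}^N(1-\alpha_i)\big)$. Summing the geometric tail $\sum_{t\ge 1}\gamma^t=\gamma/(1-\gamma)$ gives $|J(\tilde{\boldsymbol{\pi}})-\tilde{J}_{\boldsymbol{\pi}}(\tilde{\boldsymbol{\pi}})|\le \frac{4\epsilon\gamma}{1-\gamma}\big(1-\prod_{i=1}^N(1-\alpha_i)\big)$, and the identity $\frac{\gamma}{1-\gamma}\big(1-\prod_i(1-\alpha_i)\big)=\frac{1-\gamma\prod_i(1-\alpha_i)}{1-\gamma}-1$ turns this into exactly Eq.~(\ref{eq:bound}). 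An alternative, Schulman-style route keeps the sharper per-timestep factor $1-(\prod_i(1-\alpha_i))^t$ (via a coupling of the full trajectories, in which the state paths coincide before the first joint disagreement), sums two geometric series, and then loosens the result to the same closed form using $1-\gamma\prod_i(1-\alpha_i)\le 1$; both routes land on Eq.~(\ref{eq:bound}).

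The step I expect to be the main obstacle is the second one, i.e. showing that the joint-policy total-variation divergence is governed by $1-\prod_{i=1}^N(1-\alpha_i)$: this is the only place the factorized structure is genuinely used, and it requires the per-coordinate maximal coupling together with the monotonicity of $(x_1,\dots,x_N)\mapsto 1-\prod_i(1-x_i)$ to pass from the per-$(s,\tau^i)$ divergences to $D_{TV}^{\max}[\pi^i\|\tilde{\pi}^i]$ and then to $\alpha_i$. One should also take care that the $\epsilon=\max_{s,\boldsymbol{a}}|A^{\boldsymbol{\pi}}(s,\boldsymbol{a})|$ in the statement is exactly the quantity making both $|A^{\boldsymbol{\pi}}|\le\epsilon$ and $\mathbb{E}_{\boldsymbol{a}\sim\boldsymbol{\pi}}[A^{\boldsymbol{\pi}}]=0$ available for the Hölder step, and that summability of the series uses $\gamma<1$, which holds by assumption.
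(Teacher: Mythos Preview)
Your proposal is correct and shares the paper's overall blueprint: write the error as a per-timestep sum involving $\bar{A}^{\boldsymbol{\pi}}(s)=\mathbb{E}_{\boldsymbol{a}\sim\tilde{\boldsymbol{\pi}}}[A^{\boldsymbol{\pi}}(s,\boldsymbol{a})]$, bound $|\bar{A}^{\boldsymbol{\pi}}(s)|\le 2\epsilon\big(1-\prod_i(1-\alpha_i)\big)$ via the product of the $N$ per-agent maximal couplings, and then sum. The one substantive difference is in your Step~3. The paper actually follows what you call the ``alternative, Schulman-style route'': it couples the full trajectories, extracts the sharper per-timestep factor $1-\big(\prod_i(1-\alpha_i)\big)^t$, sums $\sum_{t\ge 0}\gamma^t\big[1-\prod_i(1-\alpha_i)\big]\big[1-\prod_i(1-\alpha_i)^t\big]$ into two geometric series, and only then loosens via $1-\gamma\prod_i(1-\alpha_i)\le 1$ to reach Eq.~(\ref{eq:bound}). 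Your primary route is more elementary --- you simply bound the bracketed difference at each $t\ge 1$ by $2\max_s|\bar{A}^{\boldsymbol{\pi}}(s)|$ and sum $\sum_{t\ge 1}\gamma^t$ --- which skips the trajectory coupling entirely and lands directly on the closed form. Both are valid; the paper's detour yields a strictly sharper intermediate inequality but discards the gain in the last step, so your shortcut loses nothing in the final statement. Your remark about the definition of $\alpha_i$ is also apt: the appendix in fact silently switches to $\alpha_i=\max_{\tau^i}\sqrt{\tfrac12 D_{KL}[\pi^i\|\tilde\pi^i]}$ and invokes Pinsker, which is what makes $D_{TV}^{\max}[\pi^i\|\tilde\pi^i]\le\alpha_i$ hold without the restriction $D_{TV}^{\max}\le\tfrac12$ you flagged.
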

\begin{proof}
    See Appendix \ref{sec:appA2}.
\end{proof}

As shown above, the upper bound is influenced by $\alpha_i, N$ and $\epsilon$.
By definition, we have $\alpha_i\leq 1$ and $\epsilon\geq 0$,
thus the upper bound will increase when $\alpha_i$ increases for any $i$, implying that it becomes harder to make precise approximation when any individual of the agents dramatically updates their policies.
Also, the growth in the number of agents can raise the difficulty for approximation.
As for $\epsilon$, from Eq.~(\ref{eq:bound}) we can roughly conclude that a larger advantage value can cause higher approximation error,
and this is reasonable because $\tilde{J}_{\boldsymbol{\pi}}(\tilde{\boldsymbol{\pi}})$ approximates  $J(\tilde{\boldsymbol{\pi}})$ by approximating the expectation over $A^{\boldsymbol{\pi}}$.
Transforming the inequality in Eq.~(\ref{eq:bound}) leads to
$J(\boldsymbol{\tilde{\pi}}) \geq \tilde{J}_{\boldsymbol{\pi}}(\boldsymbol{\tilde{\pi}}) - 4\epsilon\left(\frac{1-\gamma\prod_{i=1}^N(1-\alpha_i)}{1-\gamma}-1\right)$.
Thus the joint policy can be iteratively updated by:
\begin{equation}
\label{eq:idealupdate}
    \boldsymbol{\pi}_{new} = \mathop{\arg\max}_{\tilde{\boldsymbol{\pi}}}\left[\tilde{J}_{\boldsymbol{\pi}_{old}}(\boldsymbol{\tilde{\pi}})- 4\epsilon\left(\frac{1-\gamma\prod_{i=1}^N(1-\alpha_i)}{1-\gamma}-1\right)\right].
\end{equation}
Eq.~(\ref{eq:idealupdate}) involves a complete search in the joint observation space and action space for computing $\epsilon$ and $\alpha_i$, making it difficult to be applied to large-scale settings.
In the next subsection, several transformations and approximations are employed to this objective to achieve better scalability.

\subsection{The Final Algorithm}
\label{sec:Final Objective}

Notice that the complexity of optimizing the objective in Eq.~(\ref{eq:idealupdate}) mainly lies in the second term, i.e. $4\epsilon\left(\frac{1-\gamma\prod_{i=1}^N(1-\alpha_i)}{1-\gamma}-1\right)$.
While $\epsilon$ has nothing to do with $\tilde{\boldsymbol{\pi}}$, it suffices to control this second term only by limiting the variation divergence of agents' policies (i.e. $\alpha_i$),
because it increases monotonically as $\alpha_i$ increases.
Then the objective is transformed into $\tilde{J}_{\boldsymbol{\pi}_{old}}(\boldsymbol{\tilde{\pi}})$ that can be optimized subject to a trust region constraint: $\alpha_i\leq\delta, i=1,\ldots,N$.
For higher scalability, $\alpha_i$ can be replaced by the mean Kullback-Leibler Divergence between agent $i$'s two consecutive policies, i.e.
$\mathbb{E}_{s\sim\rho^{\boldsymbol{\pi}}}\left[D_{KL}[\pi^i(\cdot|\tau^i)||\tilde{\pi}^i(\cdot|\tau^i)]\right]$.

As proposed in \citep{Schulman2015TrustRP}, solving the above trust region optimization problem requires repeated computation of Fisher-vector products for each update,
which is computationally expensive in large-scale problems,
especially when there are multiple constraints.
In order to reduce the computational complexity and simplify the implementation,
importance sampling can be used to incorporate the trust region constraints into the objective
of $\tilde{J}_{\boldsymbol{\pi}_{old}}(\boldsymbol{\tilde{\pi}})$,
resulting in the maximization of $\mathbb{E}_{\boldsymbol{a}\sim\boldsymbol{\pi}_{old}}\left[\min\left(\boldsymbol{r}A^{\boldsymbol{\pi}_{old}}, \text{clip}\left(\boldsymbol{r}A^{\boldsymbol{\pi}_{old}}, 1-\epsilon, 1+\epsilon\right) \right)\right]$ w.r.t $\boldsymbol{\pi}$,
where $\boldsymbol{r}=\frac{\boldsymbol{\pi}(\boldsymbol{a}|s)}{\boldsymbol{\pi}_{old}(\boldsymbol{a}|s)}$,
and $A^{\boldsymbol{\pi}_{old}}(s, \boldsymbol{a})$ is denoted as $A^{\boldsymbol{\pi}_{old}}$ for brevity.
The clip function prevents the joint probability ratio from going beyond $[1-\epsilon, 1+\epsilon]$, thus approximately limiting the variation divergence of the joint policy.
Since the policies are independent during the fully decentralized execution,
it is reasonable to assume that $\boldsymbol{\pi}(\boldsymbol{a}|\boldsymbol{\tau})=\prod_{i=1}^N\pi^i(a^i|\tau^i)$.
Based on this factorization, the following objective can be derived:
\begin{equation}
  \label{eq:jointob}
    \mathop{\mbox{maximize}\ }\limits_{\theta^1,\ldots,\theta^N}\ \mathbb{E}_{\boldsymbol{a}\sim\boldsymbol{\pi}_{old}}\left\{\min \left[\left(\prod_{j=1}^N r^j\right) A^{\boldsymbol{\pi}}, \text{clip}\left(\left(\prod_{j=1}^N r^j\right), 1-\epsilon, 1+\epsilon\right)A^{\boldsymbol{\pi}}\right]\right\},
\end{equation}
where $\theta^j$ is the parameter of agent $j$'s policy, and $r^j=\frac{\pi^j\left(a^j|\tau^j;\theta^j\right)}{\pi^j_{old}\left(a^j|\tau^j;\theta^j_{old}\right)}$.
While $A^{\boldsymbol{\pi}}$ is defined as $Q^{\boldsymbol{\pi}}(s, \boldsymbol{a})-V^{\boldsymbol{\pi}}(s)$, the respective contribution of each individual agent cannot be well distinguished.
To enable credit assignment, the joint advantage function is decomposed to some local ones of the agents as:
$A^{\boldsymbol{\pi}}(s, \boldsymbol{a})=\sum_{i=1}^N c^i\cdot A^i(s, (a^i,\boldsymbol{a}^{-i})),$
where $A^i(s, (a^i,\boldsymbol{a}^{-i}))=Q^{\boldsymbol{\pi}}(s, (a^i, \boldsymbol{a}^{-i}))-\mathbb{E}_{\hat{a}^i}[Q^{\boldsymbol{\pi}}(s, (\hat{a}^i, \boldsymbol{a}^{-i})) ]$ is the counterfactual advantage and $c^i$ is a non-negative weight.

During each update, multiple epochs of optimization are performed on this joint objective to improve sample efficiency.
Due to the non-negative decomposition of $A^{\boldsymbol{\pi}}$,
there is a monotonic relationship between the global optimum and the local optima,
suggesting a transformation from $\mathop{\arg\max}_{\theta^1, \ldots, \theta^N}$ to $\{\mathop{\arg\max}_{\theta^1}, \ldots, \mathop{\arg\max}_{\theta^N}\}$ (see Appendix~\ref{app:local argmax} for the proof).
The optimization of Eq.~(\ref{eq:jointob}) then can be transformed to maximizing each agent's own objective:
\begin{equation}
  \label{eq:respectob}
    L(\theta^i) = \mathbb{E}_{\boldsymbol{a}\sim\boldsymbol{\pi}_{old}}\left\{\min \left[\left(\prod_{j\neq i} r^j\right)r^i A^{i}, \text{clip}\left(\left(\prod_{j\neq i} r^j\right)r^i, 1-\epsilon, 1+\epsilon\right)A^{i}\right]\right\}.
\end{equation}
However, the ratio product in Eq.~(\ref{eq:respectob}) raises a potential risk of high variance due to Proposition~\ref{propo:var}:
\begin{propo}
\label{propo:var}
  Assuming that the agents are fully independent during execution, then the following inequality holds:
  \begin{equation}
    \label{eq:var}
    \textnormal{\large Var}_{\boldsymbol{a}^{-i}\sim\boldsymbol{\pi}^{-i}_{old}}\left[\prod_{j\neq i} r^j\right] \geq \prod_{j\neq i}\textnormal{\large Var}_{a^j\sim\pi^j_{old}}\bigl[r^j\bigr].
  \end{equation}
\end{propo}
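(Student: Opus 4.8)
The plan is to exploit two structural facts: that the ratios $\{r^j\}_{j\neq i}$ are mutually independent under $\boldsymbol{\pi}^{-i}_{old}$, and that each of them has unit mean. First I would note that, by the full-independence assumption, drawing $\boldsymbol{a}^{-i}\sim\boldsymbol{\pi}^{-i}_{old}$ samples each $a^j$ ($j\neq i$) independently from $\pi^j_{old}(\cdot\,|\tau^j)$; since $r^j=\pi^j(a^j|\tau^j)/\pi^j_{old}(a^j|\tau^j)$ is a function of $a^j$ alone once the histories are held fixed, the family $\{r^j\}_{j\neq i}$ is mutually independent. I would also record that $r^j$ is a normalized importance weight, so that
\begin{align*}
  \mathbb{E}_{a^j\sim\pi^j_{old}}\!\left[r^j\right]
  &=\sum_{a^j}\pi^j_{old}(a^j|\tau^j)\,\frac{\pi^j(a^j|\tau^j)}{\pi^j_{old}(a^j|\tau^j)}\\
  &=\sum_{a^j}\pi^j(a^j|\tau^j)=1 .
\end{align*}

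Next I would use independence to rewrite the left-hand side. Since the $r^j$ are independent, $\mathbb{E}\bigl[\prod_{j\neq i}r^j\bigr]=\prod_{j\neq i}\mathbb{E}[r^j]=1$ and $\mathbb{E}\bigl[\prod_{j\neq i}(r^j)^2\bigr]=\prod_{j\neq i}\mathbb{E}[(r^j)^2]$; combining this with $\mathbb{E}[(r^j)^2]=\textnormal{Var}[r^j]+(\mathbb{E}[r^j])^2=\textnormal{Var}[r^j]+1$ gives
\begin{align*}
  \textnormal{Var}_{\boldsymbol{a}^{-i}\sim\boldsymbol{\pi}^{-i}_{old}}\!\left[\prod_{j\neq i}r^j\right]
  &=\prod_{j\neq i}\mathbb{E}\!\left[(r^j)^2\right]-1\\
  &=\prod_{j\neq i}\bigl(\textnormal{Var}_{a^j\sim\pi^j_{old}}[r^j]+1\bigr)-1 .
\end{align*}
Writing $v_j\doteq\textnormal{Var}_{a^j\sim\pi^j_{old}}[r^j]\geq 0$ and expanding the left-hand side into a sum over subsets gives $\prod_{j\neq i}(1+v_j)-1=\sum_{\emptyset\neq S\subseteq\{j\,:\,j\neq i\}}\prod_{j\in S}v_j$. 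Every summand is non-negative, and the term indexed by the full set equals $\prod_{j\neq i}v_j$; discarding the remaining terms yields $\prod_{j\neq i}(1+v_j)-1\ \geq\ \prod_{j\neq i}v_j$, which is exactly Eq.~(\ref{eq:var}).

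The argument is elementary, so I do not anticipate a real obstacle; the two points that need care are both about bookkeeping. The mean-one identity for each $r^j$ hinges on the expectation being taken under the denominator policy $\pi^j_{old}$ --- under any other sampling law the clean identity $\prod_{j\neq i}\mathbb{E}[(r^j)^2]-1$ would not appear. And the independence should be read pointwise in the joint history $\boldsymbol{\tau}$ (equivalently, conditionally on a fixed state), which is precisely the condition under which $r^j$ depends on $a^j$ only and the various $a^j$ are mutually independent; making this conditioning convention explicit is really the only subtlety in the write-up.
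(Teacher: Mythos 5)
Your proof is correct, but it takes a genuinely different route from the paper's. The paper proves the two-factor identity
\begin{equation*}
\textnormal{Var}[r^1 r^2]-\textnormal{Var}[r^1]\,\textnormal{Var}[r^2]
=\bigl(\mathbb{E}[r^1]\bigr)^2\textnormal{Var}[r^2]+\bigl(\mathbb{E}[r^2]\bigr)^2\textnormal{Var}[r^1]\ \geq\ 0
\end{equation*}
for independent factors and then extends to $N$ factors by induction, never invoking the value of $\mathbb{E}[r^j]$; you instead use the importance-sampling normalization $\mathbb{E}_{a^j\sim\pi^j_{old}}[r^j]=1$ to get the closed form $\textnormal{Var}\bigl[\prod_{j\neq i}r^j\bigr]=\prod_{j\neq i}(1+v_j)-1$ and conclude by a subset expansion with non-negative terms. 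Your version is more direct and arguably more informative (it exhibits the exact gap between the two sides as $\sum_{S\subsetneq\{j:j\neq i\}, S\neq\emptyset}\prod_{j\in S}v_j$, making the at-least-exponential growth transparent), while the paper's version is more general in that it holds for arbitrary independent random variables regardless of their means. The only point where your route leans on something extra is the identity $\mathbb{E}[r^j]=1$, which requires the support of $\pi^j(\cdot|\tau^j)$ to be contained in that of $\pi^j_{old}(\cdot|\tau^j)$; this is automatic for the softmax-parameterized policies used here, but it is worth stating since the paper's induction does not need it. Your handling of the conditioning (independence read pointwise in the joint history) matches the paper's implicit convention.
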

\begin{proof}
  See Appendix~\ref{app:variance ineq}.
\end{proof}
According to the inequality above, the variance of the product grows at least exponentially with the number of agents.
Intuitively, the existence of other agents' policies introduces instability in the estimate of each agent's policy gradient. This may further cause suboptimatlity in individual policies due to the centralized-decentralized mismatch issue mentioned in \citep{wang2020off}. To be concrete, when $A^{i}>0$, the external $\min$ operation in Eq.~(\ref{eq:respectob})  can prevent the gradient of $L(\theta^i)$ from exploding when $\prod_{j\neq i} r^j$ is large, thus limiting the variance raised from other agents; but when $A^{i}<0$, the gradient can grow rapidly in the negative direction, because $L(\theta^i)=\mathbb{E}\left[\left(\prod_{j\neq i} r^j\right)r^i A^{i}\right]$ when $\left(\prod_{j\neq i} r^j \right)r^i \geq 1 + \epsilon$. 
Moreover, the learning procedure in Eq.~(\ref{eq:respectob}) can cause a potential exploration issue, that is, different agents might be granted unequal opportunities to update their policies. Consider a scenario when the policies of some agents except agent $i$ are updated rapidly, and thus the product of these agents' ratios might already be close to the clipping threshold, then a small optimization step of agent $i$ will cause $\prod_{j=1}^N r^j$ to reach the threshold and thus being clipped. In this case, agent $i$ has no chance to update its policy, while other agents have updated their policies significantly, leading to unbalanced exploration among the agents. To address the above issues, we propose a double clipping trick to modify Eq.~(\ref{eq:respectob}) as follows:
\begin{align}
  \label{eq:finalob}
  L(\theta^i)=\mathbb{E}_{\boldsymbol{a}\sim\boldsymbol{\pi}_{old}}\left\{\min\left[g(\boldsymbol{r}^{-i}) r^iA^{i},
  \text{clip}\left(g(\boldsymbol{r}^{-i})r^i, 1-\epsilon_{1}, 1+\epsilon_1\right)A^{i}\right] \right\},
\end{align}
where $g(\boldsymbol{r}^{-i})=\text{clip}\bigl(\prod_{j\neq i}r^j, 1-\epsilon_{2}, 1+\epsilon_{2}\bigr),\ \epsilon_2<\epsilon_1$.
In Eq.~(\ref{eq:respectob}), the existence of $\prod_{j\neq i}r^j$ imposes an influence on the objective of agent $i$ through a weight of $\prod_{j\neq i}r^j$.
Therefore, the clipping on $\prod_{j\neq i}r^j$ ensures that the influence from the update of other agents on agent $i$ is limited to $[1-\epsilon_2, 1+ \epsilon_2]$, thus controlling the variance caused by other agents.
Note that from the theoretical perspective, clipping separately on each individual probability ratio (i.e. $\prod_{j=1}^N\text{clip}(r^j,\cdot,\cdot)$)  can also reduce the variance.
Nevertheless, the empirical results show that clipping separately performs worse than clipping jointly.
The detailed results and analysis for this comparison are presented in Appendix~\ref{app:clipping_separately}.
In addition, this trick also prevents the update step of each agent from being too small, because $r^i$ in Eq.~(\ref{eq:finalob}) can at least increase to $\frac{1+\epsilon_1}{1+\epsilon_2}r^i$ or decrease to $\frac{1-\epsilon_1}{1-\epsilon_2}r^i$ before being clipped in each update.

In the next subsection, we will show that the presence of other agents' probability ratio also enables a dynamic credit assignment among the agents in order to promote coordination, and thus the inner clipping threshold (i.e. $\epsilon_2$) can actually function as a balance factor to trade off between facilitating coordination and reducing variance, which will be studied empirically in Section~\ref{sec:ex}. A similar trick was proposed in~\citep{ye2020mastering,fullmoba} to handle the variance induced by distributed training in the single-agent setting. Nonetheless, since multiple policies are updated in different directions in MARL, the inner clipping here is carried out on the ratio product of other agents instead of the entire ratio product, in order to distinguish the update of different agents. The overall CoPPO algorithm with the double clipping trick is shown in Appendix~\ref{app:alg}.

\subsection{Interpretation: Dynamic Credit Assignment}
\label{sec:Interpre}

The COMA~\citep{foerster2018counterfactual} algorithm tries to address the credit assignment issue in CoMARL using the counterfactual advantage. Nevertheless, miscoordination and suboptimatlity can still arise since the credit assignment in COMA is conditioned on the fixed actions of other agents, but these actions are continuously changing and thus cannot precisely represent the actual policies. While CoPPO also makes use of the counterfactual advantage,
the overall update of other agents is taken into account dynamically during the multiple epochs in each update. This process can adjust the advantage value in a coordinated manner and alleviate the miscoordination issue caused by the fixation of other agents' joint action. Note that the theoretical reasoning for CoPPO in Section~\ref{sec:Monotonic Joint Policy Improvement} and~\ref{sec:Final Objective} originally aims at monotonic joint policy improvement,
yet the resulted objective ultimately realizes coordination among agents through a dynamic credit assignment among the agents in terms of coordinating over the step sizes of the agents' policies.

To illustrate the efficacy of this dynamic credit assignment, we make an analysis on the difference between CoPPO and  MAPPO~\citep{Yu2021TheSE} which generalizes PPO to  multi-agent settings simply by centralizing the value functions with an optimization objective of $\mathbb{E}_{\boldsymbol{\pi}_{old}}\left[\min \left[r^i_kA^{i}, \text{clip}\left(r^i_k, 1 - \epsilon, 1 + \epsilon\right)\right]\right]$, which is a lower bound of $\mathbb{E}_{\boldsymbol{\pi}_{old}}\left[r^i_kA^{i}\right]$
where $r^i_k$ represents the probability ratio of agent $i$ at the $k_{th}$ optimization epoch during each update.
Denoting $\left(\prod_{j\neq i}r^j_k\right)A^{i}$ as $\tilde{A}^{i}_{k}$, the CoPPO objective then becomes approximately a lower bound of $\mathbb{E}_{\boldsymbol{\pi}_{old}}\bigl[r^i_k\tilde{A}^{i}_{k}\bigr]$.
The discussion then can be simplified to analyzing the two lower bounds (see Appendix~\ref{app:interprete simplify} for the details of this simplification).

Depending on whether $A^{i}>0$ and whether $\prod_{j\neq i}r^j_k > 1$, four different cases can be classified. For brevity, only two of them are discussed below while the rest are similar. The initial parameters of the two methods are assumed to be the same.

Case (1): $A^{i}>0,\ \prod_{j\neq i}r^j_k>1$.
In this case $\bigl|\tilde{A}^i_k\bigr| > \left|A^i\right|$,
thus $\bigl\|\tilde{A}^i_k\nabla r^i_k\bigr\| > \bigl\|A^i\nabla r^i_k\bigr\|$,
indicating that CoPPO takes a larger update step towards increasing $\pi^i(a^i|\tau^i)$ than MAPPO does. 
Concretely, $A^{i}>0$ means that $a^i$ is considered (by agent $i$) positive for the whole team when fixing $\boldsymbol{a}^{-i}$ and under similar observations.
Meanwhile, $\prod_{j\neq i}r^j_k>1$ implies that after this update epoch,
$\boldsymbol{a}^{-i}$ are overall more likely to be performed by the other agents when encountering similar observations (see Appendix~\ref{app:trend for other agents} for the details).
This makes fixing $\boldsymbol{a}^{-i}$ more reasonable when estimating the advantage of $a^i$, thus explaining CoPPO's confidence to take a larger update step.

Case (2): $A^{i}<0,\ \prod_{j\neq i}r_k^j<1$.
Similarly, in this case
$\bigl|\tilde{A}^i_k\bigr| < \bigl|A^i\bigr|$
and hence
$\bigl\|\tilde{A}^i_k\nabla r^i_k\bigr\| < \bigl\|A^i\nabla r^i_k\bigr\|$,
indicating that CoPPO takes a smaller update step to decrease $\pi^i(a^i|\tau^i)$ than MAPPO does.
To be specific, $a^i$ is considered (by agent $i$) to have a negative effect on the whole team since $A^{i}<0$,
and $\prod_{j\neq i}r_k^j<1$ suggests that after this optimization epoch, other agents are overall less likely to perform $\boldsymbol{a}^{-i}$ given similar observations. 
While the evaluation of $a^i$ is conditioned on $\boldsymbol{a}^{-i}$, it is reasonable for agent $i$ to rethink the effect of $a^i$ and slow down the update of decreasing the probability of taking $a^i$, thus giving more chance for this action to be evaluated.

It is worth noting that $\tilde{A}^{i}_k$ continues changing throughout the K epochs of update and yields dynamic adjustments in the step size, while $A^{i}$ will remain the same during each update. Therefore, $\tilde{A}^{i}_{k}$ can be interpreted as a dynamic modification of $A^{i}$ by taking other agents' update into consideration.

\section{Experiments}
\label{sec:ex}

In this section, we evaluate CoPPO on a modified matrix penalty game and the StarCraft Multi-Agent Challenge (SMAC) \citep{Samvelyan2019TheSM}.
The matrix game results enable interpretative observations, while the evaluations on SMAC verify the efficacy of CoPPO in more complex domains.

\subsection{Cooperative Matrix Penalty Game}
\label{sec:matgame}

The \emph{penalty game} is a representative of problems with miscoordination penalties and multiple equilibria selection among optimal joint actions. It has been used as a challenging test bed for evaluating CoMARL algorithms \citep{claus1998dynamics,spiros2002reinforcement}. To further increase the difficulty of achieving coordination, we modify the two-player \emph{penalty game} to four agents with nine actions for each agent. The agents will receive a team reward of 50 when they have played the same action, but be punished by -50 if any three agents have acted the same while the other does not. In all other cases, the reward is -40 for all the agents. The \emph{penalty game} provides a verifying metaphor to show the importance of adaptive adjustment in the agent policies in order to achieve efficient coordinated behaviors. Thinking of the case when the agents have almost reached one of the optimal joint actions, yet at the current step they have received a miscoordination penalty due to the exploration of an arbitrary agent. Then smaller update steps for the three matching agents would benefit the coordinated learning process of the whole group, since agreement on this optimal joint action would be much easier to be reached than any other optimal joint actions. Therefore, adaptively coordinating over the agent policies and properly assigning credits among the agents are crucial for the agents to achieve efficient coordination in this kind of game.


\begin{figure}[ht]
  \centering
  \includegraphics[width=0.95\textwidth]{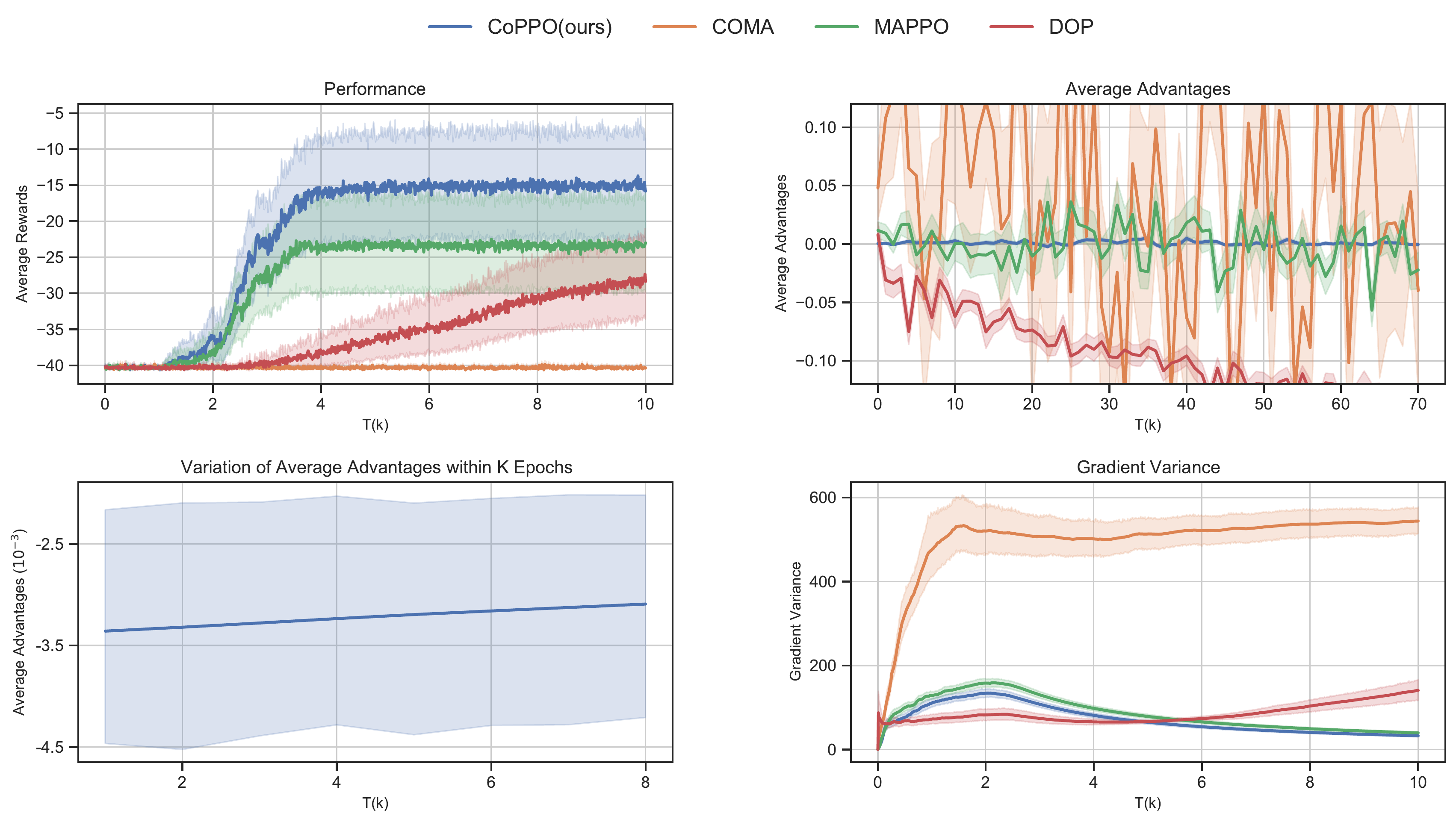}
  \caption{\footnotesize Upper left: average rewards; Upper right: average advantages after a penalty; Lower left: the variation of the average advantages within $K$ (here $K=8$) optimization epochs every time after a penalty; Lower right: running policy gradient variance.}
  \label{fig:matrix}
\end{figure}

We train CoPPO, COMA~\citep{foerster2018counterfactual}, MAPPO~\citep{Yu2021TheSE} and DOP~\citep{wang2020off} for 10,000 timesteps, and the final results are averaged over 100 runs.
The hyperparameters and other implementation details are described in Appendix~\ref{app:matrix implement}.
From Fig.~\ref{fig:matrix}-upper left, we can see that CoPPO significantly outperforms other methods in terms of average rewards.
Fig.~\ref{fig:matrix}-upper right presents an explanation of the result by showing the local advantages averaged among the three matching agents every time after receiving a miscoordination penalty. Each point on the horizontal axis represents a time step after a miscoordination penalty. While the times of penalties in a single run vary for different algorithms, we take the minimum times of 70 over all runs. 
The vertical axis represents  $\frac{1}{3}\sum_{j\neq i}A^j$ for COMA, MAPPO and DOP, and represents the mean of $K$ epochs during one update, i.e., $\frac{1}{3}\sum_{j\neq i}\frac{1}{K}\sum_{k=1}^K \tilde{A}^{j}_{k}$, for CoPPO ($i$ indicates the unmatching agent). Note that CoPPO can obtain the smallest local advantages that are close to 0 compared to other methods, indicating the smallest step sizes for the three agents in the direction of changing the current action.
Fig.~\ref{fig:matrix}-lower left shows the overall variation of the average advantages within $K$ optimization epochs after receiving a miscoordination penalty. We can see that as the number of epochs increases, the absolute value of
the average advantage of the three matching agents gradually decreases by considering the update of other agents. Since the absolute value actually determines the step size of update, a smaller value indicates a small adaptation in their current actions.
This is consistent with what we have discussed in Section~\ref{sec:Interpre}. Fig.~\ref{fig:matrix}-lower right further implies that through this dynamic process, the agents succeed in learning to coordinate their update steps carefully, yielding the smallest gradient variance among the four methods.

\paragraph{Ablation study 1} Fig.~\ref{fig:matrix_ab} provides an ablation study of the double clipping trick. We can see that a proper intermediate inner clipping threshold improves the global performance, and the double clipping trick indeed reduces the variance of the policy gradient. In contrast to DOP, which achieves low gradient variance at the expense of lack of direct coordination over the policies, CoPPO can strike a nice balance between reducing variance and achieving coordination, by taking other agents' policy update into consideration. To make our results more convincing, experiments on more cooperative matrix games with different varieties are also conducted in Appendix~\ref{app:additional results(matrix)}.

\begin{figure}[ht]
  \centering
  \includegraphics[width=1\textwidth]{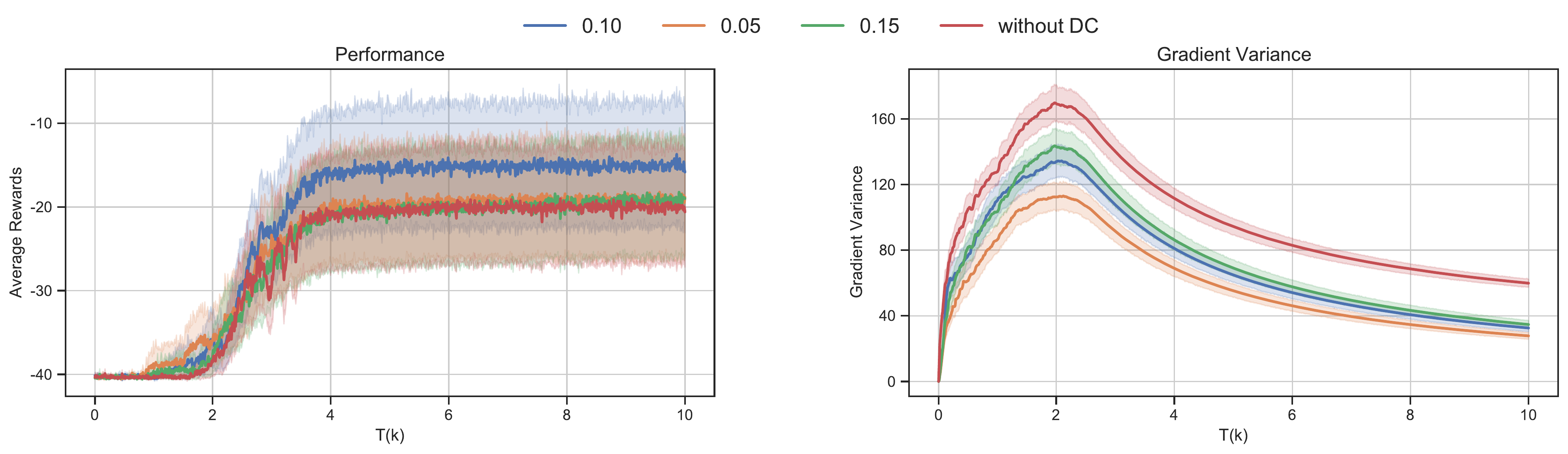}
  \caption{\footnotesize Ablation study of the double clipping trick. The numbers $0.05, 0.10, 0.15$ represent the inner clipping threshold, and "without DC" represents the case when the trick is not used. Left: average rewards; Right: running policy gradient variance.}
  \label{fig:matrix_ab}
\end{figure}

\subsection{StarCraft II}
\label{sec:StarCraft}

We evaluate CoPPO in SMAC against various state-of-the-art methods, including policy-based methods (COMA \citep{foerster2018counterfactual}, MAPPO \citep{Yu2021TheSE} and DOP \citep{wang2020off}) and value-based methods (QMIX \citep{Rashid2018QMIXMV} and QTRAN \citep{Son2019QTRANLT}).
The implementation of these baselines follows the original versions. The win rates are tested over 32 evaluation episodes after each training iteration. The hyperparameter settings and other implementation details are presented in Appendix~\ref{app:smac implement}. The results are averaged over 6 different random seeds for easy maps (the upper row in Fig.~\ref{fig:smac}), and 8 different random seeds for harder maps (the lower row in Fig.~\ref{fig:smac}).
Note that CoPPO outperforms several strong baselines including the latest multi-agent PPO (i.e., MAPPO) method in SMAC across various types and difficulties, especially in \textbf{Hard} (3s5z, 10m\_vs\_11m) and \textbf{Super Hard} (MMM2) maps. Moreover, as an on-policy method, CoPPO shows better stability across different runs, which is indicated by a narrower confidence interval around the learning curves.
\begin{figure}[ht]
  \centering
  \includegraphics[width=\textwidth]{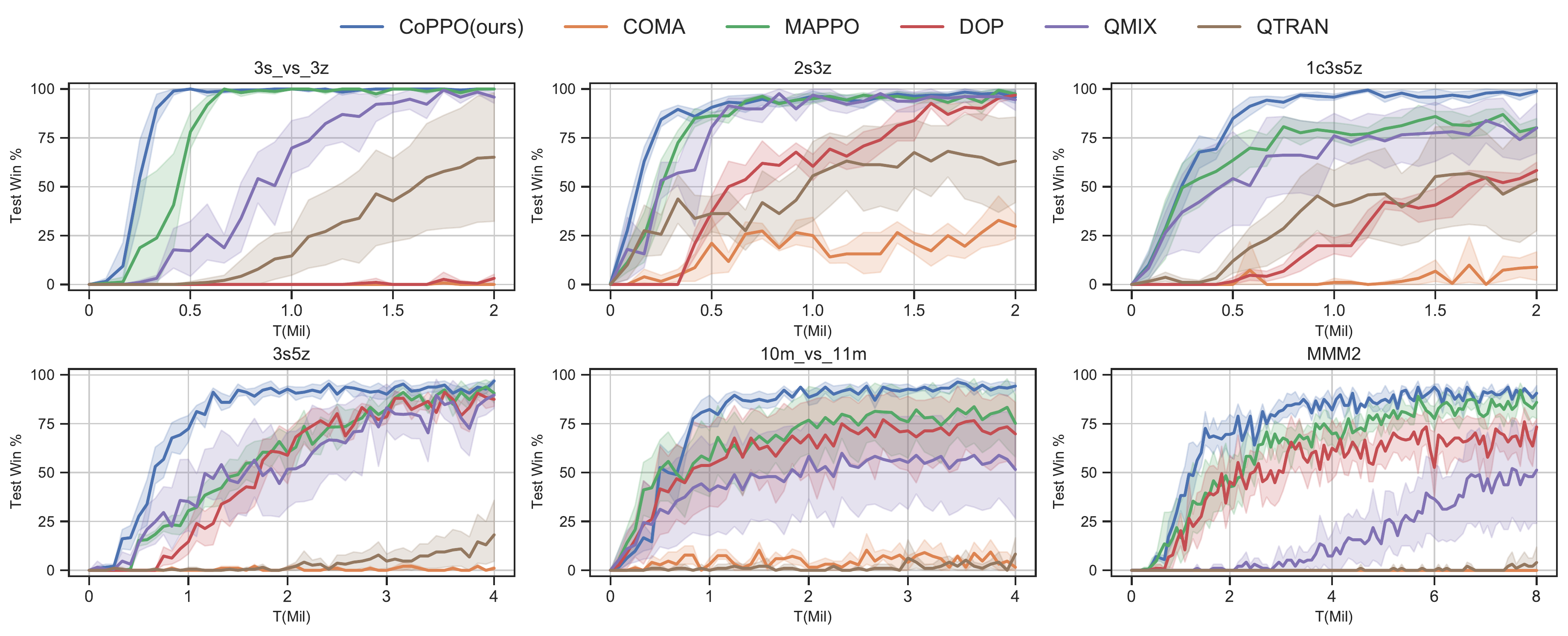}
  \caption{\footnotesize Comparisons against baselines on SMAC.}
  \label{fig:smac}
\end{figure}

\paragraph{Ablation study 2}  The first row in Fig.~\ref{fig:smac_ab} shows the ablation study of double clipping in SMAC, and we can see that the results share the same pattern as in Section~\ref{sec:matgame}.

\begin{figure}[h]
  \centering
  \includegraphics[width=0.8\textwidth]{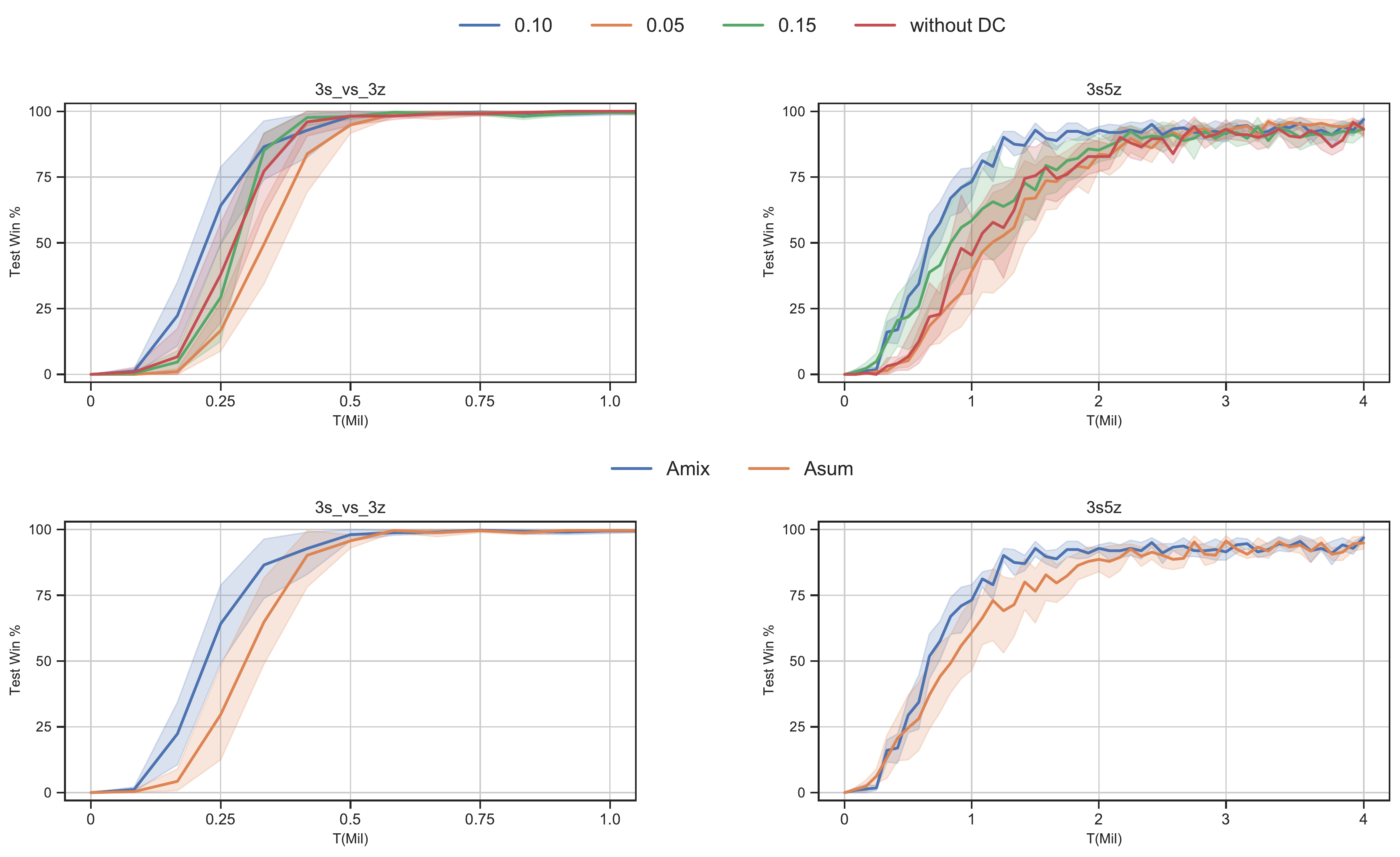}
  \caption{\footnotesize Ablation studies on the double clipping (the upper row) and the way of advantage decomposition (the lower row), evaluated on two maps respectively. In the upper row, the numbers "$0.05, 0.10, 0.15$" represent the values of the inner clipping threshold, and  "without DC" represents the case where the double clipping trick is not utilized. In the lower row, "Amix" refers to a non-negative-weighted neural network, and "Asum" refers to an arithmetic summation.
  }
  \label{fig:smac_ab}
\end{figure}
\paragraph{Ablation study 3} In Section~\ref{sec:Final Objective}, the global advantage is decomposed into a weighted sum of local advantages. We also compare it to a mixing network with non-negative weights and the results are shown in Fig.~\ref{fig:smac_ab}. Similar to QMIX~\citep{Rashid2018QMIXMV}, the effectiveness of the mixing network may largely owe to the improvement in the representational ability for the global advantage function.

\section{Related Work}
\label{sec:related}

In recent years, there has been significant progress in CoMARL. Fully centralized methods suffer from scalability issues due to the exponential growth of joint action space, 
and applying DQN to each agent independently while treating the other agents as part of environment \citep{tampuu2017multagent} suffers from the non-stationary issue~\citep{HernandezLeal2017ASO, papoudakis2019dealing}.
The CTDE paradigm~\citep{Foerster2016LearningTC} reaches a compromise between centralized and decentralized approaches, assuming a laboratory setting where each agent's policy can be trained using extra global information while maintaining scalable decentralized execution.

A series of work have been developed in the CTDE setting, including both value-based and policy-based methods. 
Most of the value-based MARL methods estimate joint action-value function by mixing local value functions.
VDN~\citep{Sunehag2018ValueDecompositionNF} first introduces value decomposition to make the advantage of centralized training and mixes the local value functions via arithmetic summation. 
To improve the representational ability of the joint action-value function, QMIX~\citep{Rashid2018QMIXMV} proposes to mix the local action-value functions via a non-negative-weighted neural network.
QTRAN~\citep{Son2019QTRANLT} studies the decentralization \& suboptimatlity trade-off and introduces a corresponding penalty term in the objective to handle it, which further enlarges the class of representable value functions.
As for the policy-based methods, COMA~\citep{foerster2018counterfactual} presents the counterfactual advantage to address the credit assignment issue. 
MADDPG~\citep{Lowe2017MultiAgentAF} extends DDPG~\citep{Lillicrap2016ContinuousCW} by learning centralized value functions which are conditioned on additional global information, such as other agents' actions. 
DOP~\citep{wang2020off} introduces value decomposition into the multi-agent actor-critic framework, which enables off-policy critic learning and addresses the centralized-decentralized mismatch issue. 
MAPPO~\citep{Yu2021TheSE} generalizes PPO \citep{Schulman2017ProximalPO} to multi-agent settings using a global value function.

The most relevant works are MAPPO \citep{Yu2021TheSE}, MATRPO \citep{Li2020MultiAgentTR} and MATRL \citep{wen2021multagent}. 
MAPPO  extends PPO to multi-agent settings simply by centralizing the critics. 
With additional techniques such as Value Normalization, MAPPO achieves promising performance compared to several strong baselines.
Note that our implementation is built on the one of MAPPO (please refer to Appendix~\ref{app:smac implement} for more details).

As for MATRPO and MATRL, they both try to extend TRPO \citep{Schulman2015TrustRP} to multi-agent settings.
MATRPO focuses on fully decentralized training, which is realized through splitting the joint TRPO objective into $N$ independent parts for each agent and transforming it into a consensus optimization problem; while MATRL computes independent trust regions for each agent assuming that other agents' policies are fixed, and then solves a meta-game in order to find the best response to the predicted joint policy of other agents derived by independent trust region optimization. 
Different from our  work, they adopt the settings where agents have separate local reward signals. 
By comparison, CoPPO does not directly optimize the constrained objective derived in Section~\ref{sec:Monotonic Joint Policy Improvement}, but instead  incorporates the trust region constraint into the optimization objective, in order to reduce the computational complexity and simplify the implementation. CoPPO can sufficiently take advantage of the centralized training and enable a coordinated adaptation of step size among agents during the policy update process.

\section{Conclusion}
\label{sec:conclu}

In this paper, we extend the PPO algorithm to the multi-agent setting and propose an algorithm named CoPPO through a theoretically-grounded derivation that ensures approximately monotonic policy improvement. CoPPO can properly address the issues of scalability and credit assignment, which is interpreted both theoretically and empirically. 
We also introduce a double clipping trick to strike the balance between reducing variance and achieving coordination by considering other agents' update.
Experiments on specially designed cooperative matrix games and the SMAC benchmark verify that CoPPO outperforms several strong baselines and is competitive with the latest multi-agent methods. 

\begin{ack}

The  work is supported by the National Natural Science Foundation of China (No. 62076259) and the Tencent AI Lab Rhino-Bird Focused Research Program (No. JR202063).
The authors would like to thank Wenxuan Zhu for pointing out some of the mistakes in the proof, Xingzhou Lou and Xianjie Zhang for running some of the experiments, as well as Siling Chen for proofreading the manuscript. 
Finally, the reviewers and metareviewer are highly appreciated for their constructive feedback on the paper.

\end{ack}

\bibliographystyle{unsrtnat}
\bibliography{neurips_2021}

\clearpage

\appendix

\section{Mathematical Details}
\label{app:math}

\subsection{Difference between the performance of two joint policies}
\label{sec:appA1}

In Section~\ref{sec:Monotonic Joint Policy Improvement}, the difference between the performance of two joint policies is expressed as follows:
\begin{equation}
  \label{eq:perf_diff_app}
      J(\tilde{{\boldsymbol \pi}})-J({\boldsymbol \pi})=\mathbb{E}_{{\boldsymbol a}\sim\tilde{{\boldsymbol \pi}},  s\sim\rho^{\tilde{{\boldsymbol \pi}}}}\left[ A^{{\boldsymbol \pi}}(s, \boldsymbol{a}) \right], 
  \end{equation}
where $\rho^{\tilde{{\boldsymbol \pi}}}$ is the unnormalized discounted visitation frequencies, i.e. $\sum_{t=0}^\infty\gamma^t\sum_s\text{Pr}(s_t=s|\tilde{\boldsymbol{\pi}})$. The proof is a multi-agent version of the proof in \citep{Kakade2002ApproximatelyOA}. Now we provide the mathematical detail formally.

\begin{proof}
  \begin{align}
    J(\tilde{\boldsymbol{\pi}})-J(\boldsymbol{\pi})&=\mathbb{E}_{\tilde{\boldsymbol{\pi}}}\left[\sum_{t=0}^\infty\gamma^t R_{t+1}-V^{\boldsymbol{\pi}}(s_0)\right]\\
    &=\mathbb{E}_{\tilde{\boldsymbol{\pi}}}\left[R_1+\gamma V^{\boldsymbol{\pi}}(s_1)-V^{\boldsymbol{\pi}}(s_0)+\gamma[R_2+\gamma V^{\boldsymbol{\pi}}(s_2)-V^{\boldsymbol{\pi}}(s_1)]+\cdots \right]\\
    &=\mathbb{E}_{\tilde{\boldsymbol{\pi}}}\left[\sum_{t=0}^\infty\gamma^t A^{\boldsymbol{\pi}}(s_t, \boldsymbol{a}) \right]\\
    &=\sum_{t=0}^\infty\gamma^t\sum_s\text{Pr}(s_t=s|\tilde{\boldsymbol{\pi}})\sum_{\boldsymbol{a}}\tilde{\boldsymbol{\pi}}(\boldsymbol{a}|s)\left[Q^{\boldsymbol{\pi}}(s, \boldsymbol{a}) - V^{\boldsymbol{\pi}}(s)\right] \\
    &=\mathbb{E}_{\boldsymbol{a}\sim\tilde{\boldsymbol{\pi}}, s\sim\rho^{\tilde{\boldsymbol{\pi}}}}\left[ A^{\boldsymbol{\pi}}(s,\boldsymbol{a}) \right]
\end{align}
\end{proof}

\subsection{Approximation that matches the true value to first order}
\label{sec:firstorder}

In Section~\ref{sec:Monotonic Joint Policy Improvement}, we claim that $\tilde{J}_{\boldsymbol{\pi}}(\tilde{\boldsymbol{\pi}})$ matches $J(\tilde{\boldsymbol{\pi}})$ to first order.
Intuitively, this means that a sufficiently small update of the joint policy which improves $\tilde{J}_{\boldsymbol{\pi}}(\tilde{\boldsymbol{\pi}})$ will also improve $J(\tilde{\boldsymbol{\pi}})$.
Now we prove it formally.
\begin{proof}
We represent the policy using its parameter, i.e. $\theta$ for $\boldsymbol{\pi}$ and $\tilde{\theta}$ for $\boldsymbol{\tilde{\pi}}$. 
Because $\tilde{J}_{\boldsymbol{\pi}}(\boldsymbol{\pi})=J(\boldsymbol{\pi})$, there are $\tilde{J}_{\theta}(\theta)=J(\theta)$. 
Furthermore, we have:
\begin{align}
  \nabla_{\tilde{\theta}} \tilde{J}_{\theta}(\tilde{\theta})\big|_{\theta}
  &= \nabla_{\tilde{\theta}}\left(J(\theta)+\mathbb{E}_{\boldsymbol{a}\sim\tilde{\boldsymbol{\pi}}, s\sim\rho^{\boldsymbol{\pi}}}\left[ A^{\boldsymbol{\pi}}(s, \boldsymbol{a}) \right]\right) \\ 
  &= \sum_{t}\gamma^t\sum_s\text{Pr}(s_t=s|\boldsymbol{\pi})\sum_{\boldsymbol{a}}\nabla_{\tilde{\theta}}\tilde{\boldsymbol{\pi}}(\boldsymbol{a}|s)\big|_{\theta}A^{\boldsymbol{\pi}}(s, \boldsymbol{a}) \\
  &= \nabla_{\tilde{\theta}} J(\tilde{\boldsymbol{\pi}})\big|_{\theta},
\end{align}
where the last step is indicated by Theorem 1 in \citep{sutton1999policy}. 

\end{proof}

\subsection{Upper bound for the error of joint policy approximation}
\label{sec:appA2}

\begin{theorem*}
  Let $\epsilon=\max_{s, \boldsymbol{a}}\left|A^{\boldsymbol{\pi}}(s, \boldsymbol{a})\right|,\alpha_i=\sqrt{\frac{1}{2}D_{TV}^{\max}[\pi^i||\tilde{\pi}^i]}, 1\leq i\leq N$, and $N$ be the total number of agents, then the error of the approximation in Eq. \ref{eq:approx} can be explicitly bounded as follows:
  
  \begin{equation}
      \left| J(\boldsymbol{\tilde{\pi}})-\tilde{J}_{\boldsymbol{\pi}}(\boldsymbol{\tilde{\pi}}) \right|\leq 4\epsilon\left[\frac{1-\gamma \prod_{i=1}^N (1-\alpha_i)}{1-\gamma}-1\right]. 
  \end{equation}
\end{theorem*}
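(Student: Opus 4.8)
The plan is to follow the single-agent argument of Schulman et al.\ (2015), but to replace the single trust-region parameter with the product structure coming from the $N$ agents' policies. The first move is to express the approximation error as a discounted sum of per-timestep terms. Writing $\bar{A}^{\boldsymbol{\pi}}(s)\doteq\mathbb{E}_{\boldsymbol{a}\sim\tilde{\boldsymbol{\pi}}}[A^{\boldsymbol{\pi}}(s,\boldsymbol{a})]$, the performance-difference identity of Appendix~\ref{sec:appA1} gives $J(\tilde{\boldsymbol{\pi}})-J(\boldsymbol{\pi})=\sum_{t=0}^{\infty}\gamma^{t}\,\mathbb{E}_{s_{t}\sim\tilde{\boldsymbol{\pi}}}[\bar{A}^{\boldsymbol{\pi}}(s_{t})]$, while unrolling $\rho^{\boldsymbol{\pi}}(s)=\sum_{t}\gamma^{t}\mathrm{Pr}(s_{t}=s\mid\boldsymbol{\pi})$ in the definition~(\ref{eq:approx}) of $\tilde{J}_{\boldsymbol{\pi}}(\tilde{\boldsymbol{\pi}})$ gives $\tilde{J}_{\boldsymbol{\pi}}(\tilde{\boldsymbol{\pi}})-J(\boldsymbol{\pi})=\sum_{t=0}^{\infty}\gamma^{t}\,\mathbb{E}_{s_{t}\sim\boldsymbol{\pi}}[\bar{A}^{\boldsymbol{\pi}}(s_{t})]$. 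Subtracting, the target becomes $\bigl|\sum_{t}\gamma^{t}\bigl(\mathbb{E}_{s_{t}\sim\tilde{\boldsymbol{\pi}}}[\bar{A}^{\boldsymbol{\pi}}(s_{t})]-\mathbb{E}_{s_{t}\sim\boldsymbol{\pi}}[\bar{A}^{\boldsymbol{\pi}}(s_{t})]\bigr)\bigr|$, so everything reduces to bounding, for each $t$, how large $\bar{A}^{\boldsymbol{\pi}}$ can be and how far the state distributions induced by $\boldsymbol{\pi}$ and $\tilde{\boldsymbol{\pi}}$ at time $t$ can drift apart.

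Next I would build a coupling. For each agent $i$, maximally couple $\pi^{i}(\cdot\mid s)$ and $\tilde{\pi}^{i}(\cdot\mid s)$, so the coupled draws disagree with probability $D_{TV}[\pi^{i}(\cdot\mid s)\|\tilde{\pi}^{i}(\cdot\mid s)]\leq\alpha_{i}$ (this is the one place the precise relation between $\alpha_i$ and the per-agent maximum total-variation divergence is used: Pinsker's inequality if $\alpha_i$ is read off a KL quantity, or the definition directly). Since under decentralized execution each agent samples independently given the state, coupling the $N$ agents independently yields a coupling of $\boldsymbol{\pi}(\cdot\mid s)$ and $\tilde{\boldsymbol{\pi}}(\cdot\mid s)$ with $\mathrm{Pr}(\boldsymbol{a}=\tilde{\boldsymbol{a}}\mid s)\geq\prod_{i=1}^{N}(1-\alpha_{i})=:\beta$ for every $s$ --- this is exactly where the factor $\prod_{i}(1-\alpha_{i})$ appears. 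Extend this to a coupling of the two trajectory laws by feeding both MDP copies the same transition noise. Two consequences follow. First, since $\mathbb{E}_{\boldsymbol{a}\sim\boldsymbol{\pi}}[A^{\boldsymbol{\pi}}(s,\boldsymbol{a})]=0$, under the coupling $\bar{A}^{\boldsymbol{\pi}}(s)=\mathbb{E}_{(\boldsymbol{a},\tilde{\boldsymbol{a}})}[A^{\boldsymbol{\pi}}(s,\tilde{\boldsymbol{a}})-A^{\boldsymbol{\pi}}(s,\boldsymbol{a})]$; the integrand vanishes on $\{\boldsymbol{a}=\tilde{\boldsymbol{a}}\}$ and is at most $2\epsilon$ off it, so $|\bar{A}^{\boldsymbol{\pi}}(s)|\leq 2\epsilon(1-\beta)$. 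Second, letting $n_{t}$ count the timesteps $t'<t$ at which the coupled joint actions differ, a chaining argument (states agree while actions have agreed, same noise) gives $\mathrm{Pr}(n_{t}=0)\geq\beta^{t}$.

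Then for each $t$ I would split both expectations over $\{n_{t}=0\}$ and $\{n_{t}>0\}$: on $\{n_{t}=0\}$ the coupled trajectories coincide through time $t$, so those two contributions are identical and cancel, and each $\{n_{t}>0\}$ remainder is at most $\mathrm{Pr}(n_{t}>0)\max_{s}|\bar{A}^{\boldsymbol{\pi}}(s)|\leq(1-\beta^{t})\cdot 2\epsilon(1-\beta)$; hence the $t$-th term is at most $4\epsilon(1-\beta)(1-\beta^{t})$. Summing the geometric series, $\sum_{t}\gamma^{t}\cdot 4\epsilon(1-\beta)(1-\beta^{t})=4\epsilon(1-\beta)\bigl(\tfrac{1}{1-\gamma}-\tfrac{1}{1-\gamma\beta}\bigr)=\tfrac{4\epsilon\gamma(1-\beta)^{2}}{(1-\gamma)(1-\gamma\beta)}$, and since $\gamma<1$ forces $1-\beta\leq 1-\gamma\beta$, this is at most $\tfrac{4\epsilon\gamma(1-\beta)}{1-\gamma}=4\epsilon\bigl(\tfrac{1-\gamma\beta}{1-\gamma}-1\bigr)$ with $\beta=\prod_{i=1}^{N}(1-\alpha_{i})$, which is precisely the stated bound.

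The main obstacle I expect is the trajectory-coupling bookkeeping: carefully justifying that conditioning on $n_{t}=0$ makes the time-$t$ state distributions coincide so the dominant terms cancel, and that the per-step disagreement probabilities chain to $\mathrm{Pr}(n_{t}=0)\geq\beta^{t}$ (this needs the independence of the agents' action draws given the state, i.e.\ the factorized joint policy). The only other delicate point is lining up the definition of $\alpha_{i}$ with the inequality $\mathrm{Pr}(a^{i}\neq\tilde{a}^{i}\mid s)\leq\alpha_{i}$, i.e.\ ensuring $\alpha_{i}$ genuinely upper-bounds the per-agent maximum total-variation divergence; everything after that is routine summation.
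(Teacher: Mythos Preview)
Your proposal is correct and follows essentially the same route as the paper's proof: the same per-state coupling giving $|\bar{A}^{\boldsymbol{\pi}}(s)|\le 2\epsilon(1-\beta)$ with $\beta=\prod_i(1-\alpha_i)$, the same trajectory coupling with $n_t$ counting joint-action disagreements so that $\Pr(n_t=0)\ge\beta^t$, the same cancellation on $\{n_t=0\}$ yielding the per-step bound $4\epsilon(1-\beta)(1-\beta^t)$, and the same geometric summation and relaxation $1-\beta\le 1-\gamma\beta$ to reach the stated bound. Your flagged caveat about aligning the definition of $\alpha_i$ with the per-agent TV bound is apt---the paper handles it via the maximal-coupling characterization (Levin et al.) together with Pinsker, exactly as you anticipate.
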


\begin{proof}
  We first prove that for a fixed $s$, the following inequality holds:
  \begin{equation}
    \label{eq:lemma2}
    \left|\mathbb{E}_{\boldsymbol{a}\sim\tilde{\boldsymbol{\pi}}}\left[A^{\boldsymbol{\pi}}(s, \boldsymbol{a})\right]  \right|\leq 2\epsilon\left[1-\prod_{i=1}^N(1-\alpha_i)\right].
  \end{equation}
  Note that
  \begin{align}
    \mathbb{E}_{\boldsymbol{a}\sim\boldsymbol{\pi}}[A^{\boldsymbol{\pi}}(s, \boldsymbol{a})]&=\boldsymbol{\pi}(\boldsymbol{a}|s)\left[ Q(s, \boldsymbol{a})-V(s)\right]\\
    &=V(s)-V(s)\\
    &=0. 
    \end{align}
  Therefore, 
  \begin{align}
    \mathbb{E}_{\tilde{\boldsymbol{a}}\sim\tilde{\boldsymbol{\pi}}}[A^{\boldsymbol{\pi}}(s, \tilde{\boldsymbol{a}})]&=\mathbb{E}_{(\boldsymbol{a}, \tilde{\boldsymbol{a}}) \sim(\boldsymbol{\pi}, \tilde{\boldsymbol{\pi}})}[A^{\boldsymbol{\pi}}(s, \tilde{\boldsymbol{a}})-A^{\boldsymbol{\pi}}(s, \boldsymbol{a})]\\
    &=\text{Pr}(\boldsymbol{a}\neq\tilde{\boldsymbol{a}})\cdot\mathbb{E}_{(\boldsymbol{a}, \tilde{\boldsymbol{a}}) \sim(\boldsymbol{\pi}, \tilde{\boldsymbol{\pi}})}[A^{\boldsymbol{\pi}}(s, \tilde{\boldsymbol{a}})-A^{\boldsymbol{\pi}}(s, \boldsymbol{a})] \\
    &=\left[1-\prod_{i=1}^N\left(1-\text{Pr}\left(a^i\neq \tilde{a}^{-i}\right)\right) \right]\mathbb{E}_{(\boldsymbol{a}, \tilde{\boldsymbol{a}}) \sim(\boldsymbol{\pi}, \tilde{\boldsymbol{\pi}})}[A^{\boldsymbol{\pi}}(s, \tilde{\boldsymbol{a}})-A^{\boldsymbol{\pi}}(s, \boldsymbol{a})] \\
    &\leq \left[1-\prod_{i=1}^N(1-\eta_i) \right]\mathbb{E}_{(\boldsymbol{a}, \tilde{\boldsymbol{a}}) \sim(\boldsymbol{\pi}, \tilde{\boldsymbol{\pi}})}[A^{\boldsymbol{\pi}}(s, \tilde{\boldsymbol{a}})-A^{\boldsymbol{\pi}}(s, \boldsymbol{a})] \\
    &\leq \left[1-\prod_{i=1}^N(1-\eta_i) \right]\cdot 2\max_{s,\boldsymbol{a}}|A^{\boldsymbol{\pi}}(s, \boldsymbol{a})| \\
    &=2\epsilon\left[1-\prod_{i=1}^N(1-\eta_i) \right],
    \label{eq:30}
\end{align}
where $\eta_i=\max_{\tau^i}\text{Pr}(a^i\neq\tilde{a}^i|\tau^i)$, and $(\boldsymbol{\pi}, \tilde{\boldsymbol{\pi}})$ represents $\{(\pi^1,\tilde{\pi}^1), \ldots, (\pi^N, \tilde{\pi}^N) \}$, $(\pi^i, \tilde{\pi}^i)$ is an $\alpha_i$-coupled policy pair for $i=1, 2, \ldots, N$. 
The definition of $\alpha_i$-coupled policy pair in \citep{Schulman2015TrustRP} implies that $(\pi^i, \tilde{\pi}^i)$ is a joint distribution $p(a^i, \tilde{a}^i|\tau^i)$ satisfying $\text{Pr}(a^i\neq \tilde{a}^i|\tau^i)\leq \alpha_i$. 

From Proposition 4.7 in \citep{levin2017markov}, if we have two distributions $p_X, p_Y$ that satisfy $D_{TV}(p_X\|p_Y)=\alpha$, then there exists a joint distribution $P(X, Y)$ whose marginals are $p_X, p_Y$, such that:
\begin{equation}
  \text{Pr}(X=Y)=1-\alpha
\end{equation}
Furthermore, note that there is a relationship between the total variation divergence and the KL divergence \citep{Pollard2000Asymptopia}: $D_{TV}(p\|q)^2\leq \frac{1}{2}D_{KL}(p\|q)$. 
Now let $\alpha_i = \max_{\tau^i}\sqrt{\frac{1}{2}D_{KL}\left[\pi^i(\cdot|\tau^i)\|\tilde{\pi}^i(\cdot|\tau^i)\right]}$, then there exists a joint distribution $(\pi^i,\tilde{\pi}^i)$ whose marginals are $\pi^i, \tilde{\pi}^i$, satisfying: 
\begin{equation}
  \text{Pr}(a^i = \tilde{a}^i|\tau^i)\geq 1-\alpha_i.
\end{equation}
Thus $\eta_i\leq\alpha_i$. 
Since $\eta_i,\alpha_i\leq 1$, $\left[1-\prod_{i=1}^N(1-\eta_i)\right]$ will increase as $\eta_i$ increases. 
Then Eq.~(\ref{eq:lemma2}) can be derived by replacing $\eta_i$ with $\alpha_i$ in Eq.~(\ref{eq:30}). 

For simplification, we denote $\mathbb{E}_{\tilde{\boldsymbol{a}}\sim\tilde{\boldsymbol{\pi}}}[A^{\boldsymbol{\pi}}(s, \tilde{\boldsymbol{a}})]$ as $\bar{A}^{\tilde{\boldsymbol{\pi}}, \boldsymbol{\pi}}(s)$ and use $n_t$ to represent the times $\boldsymbol{a}\neq\tilde{\boldsymbol{a}}$ before timestep $t$. Then there is:
\begin{align}
  &\quad\left|\mathbb{E}_{s_t\sim\rho^{\tilde{\boldsymbol{\pi}}}}[\bar{A}^{\tilde{\boldsymbol{\pi}}, \boldsymbol{\pi}}(s_t)]-\mathbb{E}_{s_t\sim\rho^{\boldsymbol{\pi}}}[\bar{A}^{\tilde{\boldsymbol{\pi}}, \boldsymbol{\pi}}(s_t)]\right| \\
  &=\text{Pr}(n_t>0)\cdot\left|\mathbb{E}_{s_t\sim\rho^{\tilde{\boldsymbol{\pi}}}}[\bar{A}^{\tilde{\boldsymbol{\pi}},\boldsymbol{\pi}}(s_t)]-\mathbb{E}_{s_t\sim\rho^{\boldsymbol{\pi}}}[\bar{A}^{\tilde{\boldsymbol{\pi}}, \boldsymbol{\pi}}(s_t)]\right|\\
   &=\left(1-\text{Pr}(n_t=0)\right)\cdot \left|\mathbb{E}_{s_t\sim\rho^{\tilde{\boldsymbol{\pi}}}|n_t>0}[\bar{A}^{\tilde{\boldsymbol{\pi}},\boldsymbol{\pi}}(s_t)]-\mathbb{E}_{s_t\sim\rho^{\boldsymbol{\pi}}|n_t>0}[\bar{A}^{\tilde{\boldsymbol{\pi}}, \boldsymbol{\pi}}(s_t)]\right|\\
   &=\left(1-\prod_{t'=0}^t\prod_{i=1}^N \text{Pr}(a^i_t=\tilde{a}^i_t|\tau^i)\right)\cdot|\cdots| \\
   &\leq\left(1-\prod_{i=1}^N\left(1-\alpha_i\right)^{t}\right)\cdot\left|\cdots\right|\\
   &\leq\left(1-\prod_{i=1}^N\left(1-\alpha_i\right)^{t}\right)\cdot2\max_s\left|\bar{A}^{\tilde{\boldsymbol{\pi}}, \boldsymbol{\pi}}(s)\right|,
\end{align}
where $|\cdots|$ denotes $\left|\mathbb{E}_{s_t\sim\rho^{\tilde{\boldsymbol{\pi}}}|n_t>0}[\bar{A}^{\tilde{\boldsymbol{\pi}},\boldsymbol{\pi}}(s_t)]-\mathbb{E}_{s_t\sim\rho^{\boldsymbol{\pi}}|n_t>0}[\bar{A}^{\tilde{\boldsymbol{\pi}}, \boldsymbol{\pi}}(s_t)]\right|$ for brevity.
Then, the following can be derived using Eq.~(\ref{eq:lemma2}):
\begin{align}
  &\quad\ \left|\mathbb{E}_{s_t\sim\rho^{\tilde{\boldsymbol{\pi}}}}[\bar{A}^{\tilde{\boldsymbol{\pi}}, \boldsymbol{\pi}}(s_t)]-\mathbb{E}_{s_t\sim\rho^{\boldsymbol{\pi}}}[\bar{A}^{\tilde{\boldsymbol{\pi}}, \boldsymbol{\pi}}(s_t)]\right|\\
   &\leq 2\left(1-\prod_{i=1}^N\left(1-\alpha_i\right)^{t}\right)\left[1-\prod_{i=1}^N\left(1-\alpha_i\right) \right]\max_{s,\boldsymbol{a}}|A^{\boldsymbol{\pi}}(s, \boldsymbol{a})|\\
   &\leq 4\epsilon\left[1-\prod_{i=1}^N\left(1-\alpha_i\right) \right]\left[1-\prod^N_{i=1}\left(1-\alpha_i\right)^{t}\right],
\end{align}
Finally we reach our conclusion:
\begin{align}
  \left| J(\boldsymbol{\tilde{\pi}})-L_{\boldsymbol{\pi}}(\boldsymbol{\tilde{\pi}}) \right|
  &=\left|\mathbb{E}_{\boldsymbol{a}\sim\tilde{\boldsymbol{\pi}}, s\sim\rho^{\tilde{\boldsymbol{\pi}}}}\left[ A^{\boldsymbol{\pi}}(s,\boldsymbol{a}) \right]-\mathbb{E}_{\boldsymbol{a}\sim\tilde{\boldsymbol{\pi}}, s\sim\rho^{\boldsymbol{\pi}}}\left[ A^{\boldsymbol{\pi}}(s,\boldsymbol{a}) \right]\right|\\
  &=\Biggl| \sum_s\sum_{t=0}^\infty\gamma^t\text{Pr}(s_t=s|\tilde{\boldsymbol{\pi}})\sum_{\boldsymbol{a}}\tilde{\boldsymbol{\pi}}(\boldsymbol{a}|s)A^{\boldsymbol{\pi}}(s,\boldsymbol{a})- \nonumber\\
  &\quad\sum_s\sum_{t=0}^\infty\gamma^t\text{Pr}(s_t=s|\boldsymbol{\pi})\sum_{\boldsymbol{a}}\tilde{\boldsymbol{\pi}}(\boldsymbol{a}|s)A^{\boldsymbol{\pi}}(s,\boldsymbol{a})\Biggr|\\ 
  &\leq\sum_{t=0}^\infty\gamma^t\left|\mathbb{E}_{s_t\sim\rho^{\tilde{\boldsymbol{\pi}}}}[\bar{A}^{\tilde{\boldsymbol{\pi}}, \boldsymbol{\pi}}(s_t)]-\mathbb{E}_{s_t\sim\rho^{\boldsymbol{\pi}}}[\bar{A}^{\tilde{\boldsymbol{\pi}}, \boldsymbol{\pi}}(s_t)]\right|\\
  &\leq\sum_{t=0}^\infty\gamma^t\cdot 4\epsilon\left[1-\prod_{i=1}^N(1-\alpha_i) \right]\left[1-\prod_{i=1}^N(1-\alpha_i)^{t}\right]\\
  &=4\epsilon\left[1-\prod_{i=1}^N(1-\alpha_i) \right]\left[\frac{1}{1-\gamma}-\frac{1}{1-\gamma\prod_{i=1}^N(1-\alpha_i)}\right]\\
  &\leq 4\epsilon\left[\frac{1-\gamma\prod_{i=1}^N(1-\alpha_i)}{1-\gamma}-1 \right].
\end{align}

\end{proof}

\subsection{Transformation from the joint objective into the local objectives}
\label{app:local argmax}

In Section~\ref{sec:Final Objective}, the joint objective is derived as:
\begin{equation}
  \label{appeq:jointob}
    \mathop{\mbox{maximize}\ }\limits_{\theta^1,\ldots,\theta^N}\ \mathbb{E}_{\boldsymbol{a}\sim\boldsymbol{\pi}_{old}}\left\{\min \left[\left(\prod_{j=1}^N r^j\right) A^{\boldsymbol{\pi}}, \text{clip}\left(\left(\prod_{j=1}^N r^j\right), 1-\epsilon, 1+\epsilon\right)A^{\boldsymbol{\pi}}\right]\right\},
\end{equation}
where $\theta^j$ is the parameter of agent $j$'s policy, 
and $r^j=\frac{\pi^j\left(a^j|\tau^j;\theta^j\right)}{\pi^j_{old}\left(a^j|\tau^j;\theta^j_{old}\right)}$.
After a linear decomposition on $A^{\boldsymbol{\pi}}$ with non-negative weights (i.e. $A^{\boldsymbol{\pi}} = \sum_{j}c^jA^j$), the objective above then can be transformed into:
\begin{equation}
  \label{appeq:respectob}
  \mathop{\mbox{maximize}\ }\limits_{\theta^i}\ \mathbb{E}_{\boldsymbol{a}\sim\boldsymbol{\pi}_{old}}\left\{\min \left[\left(\prod_{j\neq i} r^j\right)r^i A^{i}, \text{clip}\left(\left(\prod_{j\neq i} r^j\right)r^i, 1-\epsilon, 1+\epsilon\right)A^{i}\right]\right\},
\end{equation}
where $i=1, \ldots, N$.
Now we provide a detailed proof.
\begin{proof}
If 
\begin{align}
\min \left[\left(\prod_{j=1}^N r^j\right) A^{\boldsymbol{\pi}}, \text{clip}\left(\left(\prod_{j=1}^N r^j\right), 1-\epsilon, 1+\epsilon\right)A^{\boldsymbol{\pi}}\right] = 
\text{clip}\left(\left(\prod_{j=1}^N r^j\right), 1-\epsilon, 1+\epsilon\right)A^{\boldsymbol{\pi}},
\end{align}
then the objective is actually $(1 - \epsilon)A^{\boldsymbol{\pi}}$ or $(1 + \epsilon)A^{\boldsymbol{\pi}}$, 
and no gradient will be backpropagated as none of $\theta^1, \ldots, \theta^N$ is in the objective.
Furthermore, there is 
  \begin{align}
    &\min \left[\left(\prod_{j=1}^N r^j\right) A^{\boldsymbol{\pi}}, \text{clip}\left(\left(\prod_{j=1}^N r^j\right), 1-\epsilon, 1+\epsilon\right)A^{\boldsymbol{\pi}}\right] \\
    = &\min \left[\sum_ic^i\left(\prod_{j=1}^N r^j\right) A^{i}, \sum_ic^i\text{clip}\left(\left(\prod_{j=1}^N r^j\right), 1-\epsilon, 1+\epsilon\right)A^{i}\right].
  \end{align}
Thus, the discussion can be simplified to the case where 
\begin{equation}
\min \left[\left(\prod_{j=1}^N r^j\right) A^{\boldsymbol{\pi}}, \text{clip}\left(\left(\prod_{j=1}^N r^j\right), 1-\epsilon, 1+\epsilon\right)A^{\boldsymbol{\pi}}\right] = 
\left(\prod_{j=1}^N r^j\right) A^{\boldsymbol{\pi}}. 
\end{equation}
While $\tfrac{\partial \left(\prod_{j=1}^N r^j\right)A^{\boldsymbol{\pi}}}{\partial \left(\prod_{j=1}^N r^j\right)A^i} = c^i $ and $c^i\geq 0$, there is
\begin{align}
  \mathop{\mbox{max}\ }\limits_{\theta^1, \ldots, \theta^N}\ \left(\prod_{j=1}^N r^j\right) A^{\boldsymbol{\pi}}
  &= \mathop{\mbox{max}\ }\limits_{\theta^1, \ldots, \theta^N}\ \sum_ic^i\left(\prod_{j=1}^N r^j\right) A^{i}\\
  &= \sum_i c^i \mathop{\mbox{max}\ }\limits_{\theta^i}\ \left(\prod_{j=1}^N r^j\right) A^{i}
\end{align}
Therefore, the transformation from Eq.~(\ref{appeq:jointob}) to Eq.~(\ref{appeq:respectob}) is proved. 

\end{proof}

\subsection{The potential high variance of probability ratio product}
\label{app:variance ineq}

Section~\ref{sec:Final Objective} mentions that there exists a risk of high variance in estimating the policy gradient when optimizing Eq.~(\ref{eq:jointob}), due to the following proposition:
\begin{propo*}
  Assuming that the agents are fully independent during execution, then the following inequality holds:
\begin{equation}
  \label{eq:appA2}
  \text{\large Var}_{\boldsymbol{a}^{-i}\sim\boldsymbol{\pi}^{-i}_{old}}\left[\prod_{j\neq i} r^j\right] \geq \prod_{j\neq i}\text{\large Var}_{a^j\sim\pi^j_{old}}\bigl[r^j\bigr],
\end{equation}
where $r^j=\frac{\pi^j(a^j|\tau^j;\theta^j)}{\pi^j_{old}(a^j|\tau^j;\theta^j_{old})}$. 
\end{propo*}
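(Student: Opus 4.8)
The plan is to exploit independence of the agents under $\boldsymbol{\pi}^{-i}_{old}$, which makes $\prod_{j\neq i} r^j$ a product of independent random variables. First I would recall the elementary identity for a product of independent random variables $X_1,\dots,X_m$:
\begin{equation}
\label{eq:prodvar}
\textnormal{Var}\Bigl[\prod_{j} X_j\Bigr]=\prod_j\bigl(\textnormal{Var}[X_j]+(\mathbb{E}[X_j])^2\bigr)-\prod_j(\mathbb{E}[X_j])^2.
\end{equation}
Here each $X_j=r^j$ and, crucially, $\mathbb{E}_{a^j\sim\pi^j_{old}}[r^j]=\sum_{a^j}\pi^j_{old}(a^j|\tau^j)\frac{\pi^j(a^j|\tau^j)}{\pi^j_{old}(a^j|\tau^j)}=\sum_{a^j}\pi^j(a^j|\tau^j)=1$, so every mean equals $1$. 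Substituting $\mathbb{E}[X_j]=1$ into \eqref{eq:prodvar} collapses it to
\begin{equation}
\textnormal{Var}_{\boldsymbol{a}^{-i}\sim\boldsymbol{\pi}^{-i}_{old}}\Bigl[\prod_{j\neq i} r^j\Bigr]=\prod_{j\neq i}\bigl(1+\textnormal{Var}_{a^j\sim\pi^j_{old}}[r^j]\bigr)-1.
\end{equation}

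The remaining step is purely a matter of bounding $\prod_{j\neq i}(1+v_j)-1$ from below by $\prod_{j\neq i}v_j$, where $v_j=\textnormal{Var}_{a^j\sim\pi^j_{old}}[r^j]\ge 0$. This follows by simply expanding the product $\prod_{j\neq i}(1+v_j)$ into its sum over subsets: the full-set term is exactly $\prod_{j\neq i}v_j$, the empty-set term is the $1$ that gets subtracted, and all the remaining (mixed) terms are nonnegative. Hence $\prod_{j\neq i}(1+v_j)-1\ge\prod_{j\neq i}v_j$, which is the claimed inequality. One can alternatively present this via a one-line induction on the number of agents.

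I would structure the write-up as: (i) state the independence consequence, (ii) compute $\mathbb{E}[r^j]=1$, (iii) derive the closed form for the variance of the product, (iv) finish with the subset-expansion bound. I do not anticipate a genuine obstacle here — the only thing to be careful about is making the independence assumption do its job in two places: once to factor the expectation of the product (giving the closed form) and implicitly it is the hypothesis that lets us write the joint law of $\boldsymbol{a}^{-i}$ as $\prod_{j\neq i}\pi^j_{old}$. The inequality is then tight precisely when at most one $v_j$ is nonzero, i.e. when all but one of the other agents' policies are unchanged, which matches the intuition in the main text that the compounded variance is the price of multiple simultaneous updates.
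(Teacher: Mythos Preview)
Your proof is correct. The key ingredients---independence to factor second moments, the identity $\mathbb{E}_{a^j\sim\pi^j_{old}}[r^j]=1$, and the subset expansion of $\prod_{j\neq i}(1+v_j)$---all hold, and your closed form $\prod_{j\neq i}(1+v_j)-1$ is a nice byproduct.

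The paper takes a slightly different route. It never invokes $\mathbb{E}[r^j]=1$; instead it proves directly, for any two independent random variables, that
\[
\textnormal{Var}[XY]-\textnormal{Var}[X]\,\textnormal{Var}[Y]
=(\mathbb{E}[X])^2\,\textnormal{Var}[Y]+(\mathbb{E}[Y])^2\,\textnormal{Var}[X]\ge 0,
\]
and then applies a short induction on the number of factors, grouping $\prod_{j\neq i}^N r^j$ as one block against $r^{N+1}$. So the paper's argument is more general (it holds for arbitrary independent random variables, not just importance ratios with unit mean), while yours is more specific but sharper: by exploiting $\mathbb{E}[r^j]=1$ you obtain the exact variance and make the slack in the inequality---all the partial-subset cross terms---explicit. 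Either presentation is fine; if you want to match the paper, drop the $\mathbb{E}[r^j]=1$ step and run the two-factor identity plus induction instead.
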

Because the agents execute the actions based only on locally observable information, it is reasonable to assume that $\pi^i$ and $\pi^j$ is independent when $i\neq j$. 
Now we present a detailed proof for this proposition. 

\begin{proof}
  
  Because the agents are fully independent during execution,
  there is a decomposition that $\boldsymbol{\pi}^{-i}(\boldsymbol{a}^{-i}|\boldsymbol{\tau}^{-i})=\prod_{j\neq i}\pi^j(a^j|\tau^j)$. 

  Now we use mathematical induction to prove the fact. 
  First, we assume that there are 3 agents, and let $i=3$ without loss in generality. 
  Then there is:
  \begin{align}
    \label{eq:vardecom}
    \text{\large Var}_{a^1, a^2}\left[r_1r_2\right]
    &= \mathbb{E}_{a^1, a^2}\left[\left(r_1r_2\right)^2\right] - \left(\mathbb{E}_{a^1,a^2}\left[r_1r_2\right]\right)^2 \\
    &= \mathbb{E}_{a^1}\left[r_1^2\right]\mathbb{E}_{a^2}\left[r_2^2\right] - \left(\mathbb{E}_{a^1}\left[r_1\right]\mathbb{E}_{a^2}\left[r_2\right]\right)^2.
  \end{align}
  Hence, there is: 
  \begin{align}
    &\text{\large Var}_{a^1, a^2}\left[r_1r_2\right] - \text{\large Var}_{a^1}\left[r_1\right]\text{\large Var}_{a^2}\left[r_2\right]\\
    = &\mathbb{E}_{a^1}\left[r_1^2\right]\mathbb{E}_{a^2}\left[r_2^2\right] - \left(\mathbb{E}_{a^1}\left[r_1\right]\mathbb{E}_{a^2}\left[r_2\right]\right)^2 - \nonumber \\ 
    &\left[\mathbb{E}_{a^1}\left[r_1^2\right]- \left(\mathbb{E}_{a^1}\left[r_1\right]\right)^2 \right] \left[\mathbb{E}_{a^2}\left[r_2^2\right]- \left(\mathbb{E}_{a^2}\left[r_2\right]\right)^2 \right] \\
    =&\left(\mathbb{E}_{a^1}\left[r_1\right]\right)^2\mathbb{E}_{a^2}\left[r_2^2\right] + \left(\mathbb{E}_{a^2}\left[r_2\right]\right)^2\mathbb{E}_{a^1}\left[r_1^2\right] -
    2 \left(\mathbb{E}_{a^1}\left[r_1\right] \mathbb{E}_{a^2}\left[r_2\right]\right)^2 \\
    =&\left(\mathbb{E}_{a^1}\left[r_1\right]\right)^2 \text{\large Var}_{a^2}\left[r_2\right] + \left(\mathbb{E}_{a^2}\left[r_2\right]\right)^2 \text{\large Var}_{a^1}\left[r_1\right]\geq 0.
  \end{align}
  By now we have proven $\text{\large Var}_{a^1, a^2}\left[r_1r_2\right] \geq \text{\large Var}_{a^1}\left[r_1\right]\text{\large Var}_{a^2}\left[r_2\right]$. 
  Then if Eq.~(\ref{eq:appA2}) holds for the case of $N$ agents, then obviously there is:
  \begin{align}
    \prod_{j\neq i}^{N+1}\text{\large Var}\left[r^j\right]
    &= \left(\prod_{j\neq i}^N\text{\large Var}\left[ r^j \right]\right) \text{\large Var}_{a^{N+1}}\left[r^{N+1}\right] \\
    &\leq \text{\large Var}\left[\prod_{j\neq i}^Nr^j\right] \text{\large Var}_{a^{N+1}}\left[r^{N+1}\right] \\
    &\leq \text{\large Var}\left[\prod_{j\neq i}^{N+1}r^j\right],
  \end{align}
  thus proving the proposition.

\end{proof}

\subsection{The simplification in the analysis of CoPPO and MAPPO}
\label{app:interprete simplify}

In Section~\ref{sec:Interpre}, the difference between CoPPO and MAPPO is simplified to the difference between 
$\mathbb{E}_{\boldsymbol{\pi}_{old}}\left[r^i_kA^{i}\right]$ 
and
$\mathbb{E}_{\boldsymbol{\pi}_{old}}\bigl[r^i_k\tilde{A}^{i}_{k}\bigr]$. 
Now we detail the rationality of this simplification.

In each update, the value of both the two objectives start from the respective lower bounds and are updated conservatively during the optimization epochs.
The objectives monotonically increase or decrease until they reach the clipping threshold.
No update will be made when the objective is clipped, because $\theta^i$ is not in the clipped value (i.e. $(1-\epsilon_1)A^i$ or $(1+\epsilon_1)A^i$) and no gradient will be backpropagated then, 
just as discussed in Appendix~\ref{app:local argmax}.

\subsection{$\prod_{j\neq i}r_k^j$ implies the variation of the probability to take $\boldsymbol{a}^{-i}$}
\label{app:trend for other agents}

Section~\ref{sec:Interpre}  mentions that $\prod_{j\neq i}r_k^j > 1$ will cause an increase in $\boldsymbol{\pi}^{-i}(\boldsymbol{a}^{-i}|\boldsymbol{\tau}^{-i})$ and vice versa. 
Now we provide the details. 

Similar to Appendix~\ref{app:variance ineq}, the decentralized policies can be viewed independently, thus $\boldsymbol{\pi}^{-i}(\boldsymbol{a}^{-i}|\boldsymbol{\tau}^{-i})=\prod_{j\neq i}\pi^j(a^j|\tau^j)$. 
By definition, $\prod_{j\neq i}r^j_k = \prod_{j\neq i}\frac{\pi^j_k(a^j|\tau^j)}{\pi^j_{old}(a^j|\tau^j)}$. 
Synthesizing the two equations, we have $\prod_{j\neq i}r^j_k = \frac{\boldsymbol{\pi}^{-i}_k(\boldsymbol{a}^{-i}|\boldsymbol{\tau}^{-i})}{\boldsymbol{\pi}^{-i}_{old}(\boldsymbol{a}^{-i}|\boldsymbol{\tau}^{-i})}$ which
suggests that if $\prod_{j\neq i}r^j_k>1$, $\boldsymbol{a}^{-i}$ will be more likely to be jointly performed by the other agents given similar observations, and vice versa. 

\section{Pseudo Code}
The details of our CoPPO algorithm are given in Algorithm~\ref{algo}.
\label{app:alg}
\begin{algorithm}
  \caption{The CoPPO Algorithm} 
  \label{algo}
	\begin{algorithmic}[1]
    \State Initialize policies $\pi^1_{old},\ldots,\pi^N_{old}$ for $N$ agents respectively;
		\For {$iteration=1,2,\ldots$}
			\For {$rollout\ thread=1,2,\ldots,R$}
				\State Run policies $\pi^{1:N}_{old}$ in environment for $T$ time steps;
				\State Compute advantage estimates $\hat{A}_{1:T}^{\pi_{old}^{j}},\ldots,\hat{A}_{1:T}^{\pi_{old}^{j}}, \ j=1, 2, \ldots, N$;
			\EndFor
      \For {$k=0, 1,\ldots,K-1$}
        \For {$i=1, 2,\ldots,N$}
          \State Optimize the objective
          \State $L(\theta^i)=\mathbb{E}_{\boldsymbol{a}\sim\boldsymbol{\pi}_{old}}\left\{\min\left[g(\boldsymbol{r}^{-i}) r^iA^{i},
  \text{clip}\left(g(\boldsymbol{r}^{-i})r^i, 1-\epsilon_{1}, 1+\epsilon_1\right)A^{i}\right] \right\}$
          \State to update the policy $\pi^i$ w.r.t. $\theta^i$;
        \EndFor
      \EndFor
      \State $\theta_{old}^{j}\leftarrow\theta_{K}^{j},\ j=1, 2, \ldots N$;
		\EndFor
	\end{algorithmic} 
\end{algorithm}

\section{Implementation Details}
\label{app:implement}

Experiments are conducted on NVIDIA Quadro RTX 5000 GPUs.
The network architectures, optimizers, hyperparameters and environment settings in the cooperative matrix game and SMAC are described respectively in the following subsections.

\subsection{Cooperative matrix game}
\label{app:matrix implement}

We utilize the same actor-critic network architecture for all the algorithms. 
The actor consists of two 18-dimensional fully-connected layers with $\tanh$ activation. 
For the critic, two 72-dimensional fully-connected layers are adopted with $\tanh$ activation.
For the hyper network in DOP which is used to derive the weights and biases for local value mixing, 
we use two 36-dimensional fully-connected layers with $\tanh$ activation for both the weights and biases deriving.
The optimization of both the actors and critics is conducted using RMSprop with the learning rate of $5\times 10^{-4}$ and $\alpha$ of $0.99$. 
No momentum or weight decay is used in the optimizers.
The discounted factor is set to $0.99$; 
the number of the optimization epochs (i.e. $K$) for CoPPO and MAPPO is set to $8$;
the outer clipping threshold (i.e. $\epsilon$ for MAPPO and $\epsilon_1$ for CoPPO) is set to $0.20$.
For the inner clipping threshold in CoPPO,
we consider $\epsilon_2\in\{0.05, 0.10, 0.15\}$ and adopt $0.10$ in the comparison with baselines.
For exploration, we use $\epsilon$-greedy with $\epsilon$ annealed linearly from $0.9$ to $0.02$ over $6k$ timesteps.

\subsection{SMAC}
\label{app:smac implement}

The same actor-critic network architecture are utilized for all maps we have evaluated on. 
Both the actor and critic networks consist of two fully-connected layers, one GRU layer and one fully-connected layer sequentially with ReLU activation.
For the mixing network mentioned in Section~\ref{sec:Final Objective}, 
we adopt the hyper network in \citep{Rashid2018QMIXMV} to derive the weights and bias for local advantages, 
and enforce the weights to be non-negative. 
Similar to QMIX, the input of the hyper network is the global state. 
The dimensions of these layers are all set to 64, except for the 32-dimensional hidden layers of the mixing network.

For the evaluation on different maps, all the hyperparameters are fixed except for the number of optimization epochs which is set to 15  for 2s3z, 3s\_vs\_3z, and 1c3s5z, 10 for 3s5z and 10m\_vs\_11m, and 8 for MMM2.
The number of epochs overall decreases as the difficulty of the map increases, ranging from $5$ to $15$.
The optimization of both the actors and critics is conducted using RMSprop with the learning rate of $5\times 10^{-4}$ and $\alpha$ of $0.99$. 
No momentum or weight decay is used in the optimizers. 
The discounted factor $\gamma$ is set to 0.99. 
For advantage estimation, the generalized advantage estimation \citep{Schulman2016HighDimensionalCC} is adopted and the corresponding hyperparameter $\lambda$ is set to 0.90. 
Note that state value functions instead of state-action value functions are estimated in SMAC.
The inner clipping threshold (i.e. $\epsilon_2$ for CoPPO) is set to   $0.10$, while the outer clipping threshold (i.e. $\epsilon$ for MAPPO and $\epsilon_1$ for CoPPO) is set to $0.20$.
8 parallel environments are run for data collecting. 

Overall, our implementation builds upon the one of \citep{Yu2021TheSE}.
Note that MAPPO uses hand-coded states  (i.e. Feature-Pruned Agent-Specific Global State) as the input of value functions, while in our implementation these states are modified into the concatenation of the Environment-Provided Global State and the Local Observation, in order to make the comparison with baselines fair.
For the other baselines, we adopt the official implementations and their default hyperparameter settings that have been fine-tuned on this benchmark.

\section{Additional Results}
\subsection{Cooperative matrix games}
\label{app:additional results(matrix)}

Section~\ref{sec:matgame} shows the results on a modification of the two-player penalty game.
Now we present the results on other matrix games  across different types and different difficulties in Fig.~\ref{fig:matrix additional}, and CoPPO outperforms the other methods in almost all the games, thus showing the general effectiveness.
For evaluation, the results are also averaged over 100 runs.

\begin{figure}[ht]
  \centering
  \includegraphics[width=\textwidth]{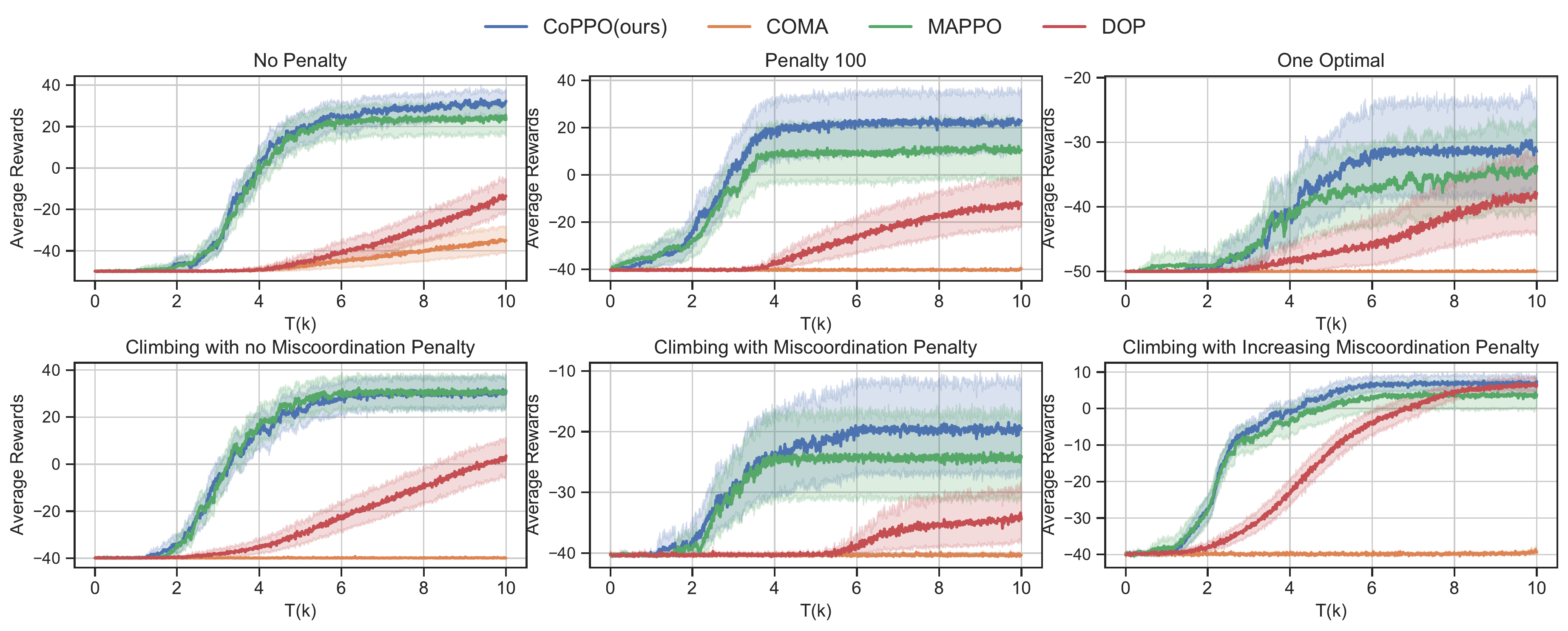}
  \caption{\footnotesize Performance comparisons in six matrix games.
  }
  \label{fig:matrix additional}
\end{figure}

These games are all 4-agent, 9-action cooperative games.
The respective reward settings are as follows.
The "miscoordination" mentioned below all refers to the case where any three agents act the same while the other does not.
Fig.~\ref{fig:matrix additional}-upper left and middle are both  simplifications of the penalty game presented in Section~\ref{sec:matgame}.
In Fig.~\ref{fig:matrix additional}-upper left, there is no penalty for miscoordination; in Fig.~\ref{fig:matrix additional}-upper middle, the team reward becomes larger (100) when the agents play the same action. 
The other rewards are set the same with the one in Section~\ref{sec:matgame}.
In Fig.~\ref{fig:matrix additional}-upper right, there is only one optimal joint action and the difficulty lies mainly in exploration.
The agents will receive the reward of 50 if agent $i$ plays action $i$ and -50 otherwise. 
Fig.~\ref{fig:matrix additional}-lower left is the result on a modification of the \emph{climbing game} that has been used as another challenging test bed for CoMARL algorithms \citep{claus1998dynamics}, where the reward is $i\cdot 10$ if the agents all play action $i$ and -40 otherwise.
Fig.~\ref{fig:matrix additional}-lower middle and right gradually increase the difficulty of the climbing game by setting obstacles in the way of climbing.
In Fig.~\ref{fig:matrix additional}-lower middle, the agents will be punished by -50 for miscoordination.
As for Fig.~\ref{fig:matrix additional}-lower right, the miscoordination penalty increases as the matching reward increases, i.e. $-i\cdot 10$ for miscoordination on action $i$, hence the risk will become higher and higher when the agents are "climbing" to the optimal joint action.

\begin{figure}[h]
  \centering
  \includegraphics[width=0.8\textwidth]{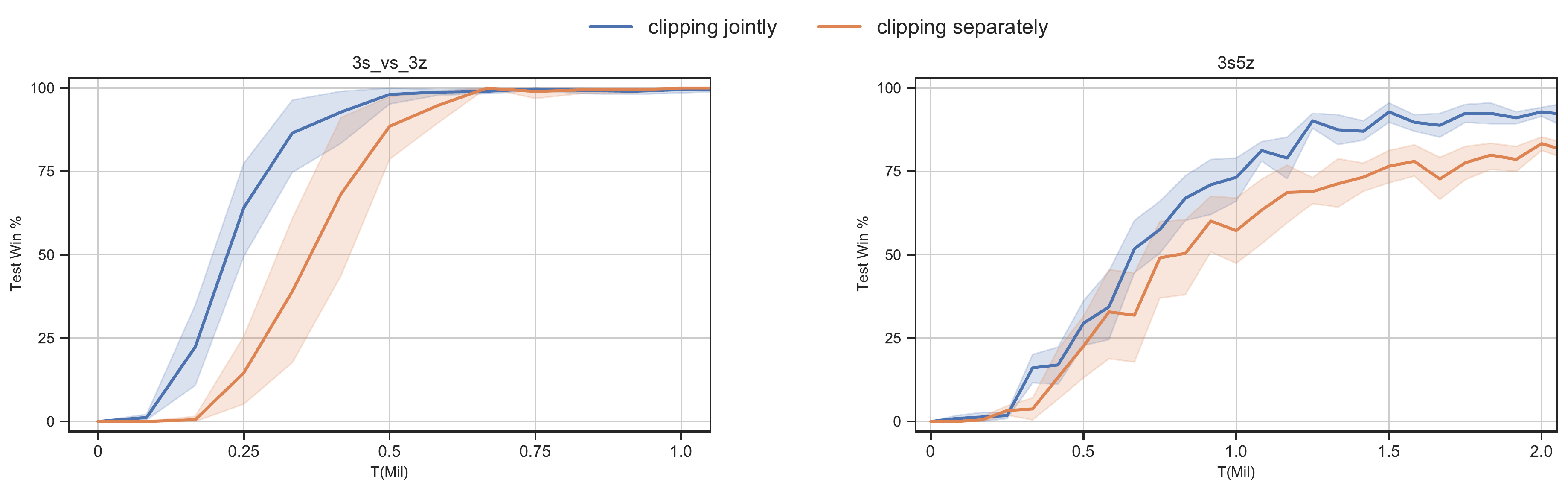}
  \caption{\footnotesize Ablation study on the methods of clipping.}
  \label{fig:smac_clippingseparately}
\end{figure}

\subsection{SMAC}

\subsubsection{Comparison of clipping jointly and separately}
\label{app:clipping_separately}
We empirically evaluate two clipping approaches mentioned in Section~\ref{sec:Final Objective}, i.e.  clipping jointly ($\text{clip}(\prod_{j=1}^Nr^j,\cdot,\cdot)$) and clipping separately ($\prod_{j=1}^N\text{clip}(r^j,\cdot,\cdot)$). 
The results shown in Fig.~\ref{fig:smac_clippingseparately} demonstrate that clipping separately performs  worse than clipping jointly. 
To find the cause resulting in this performance discrepancy,  an empirical analysis is conducted on the value of policy gradients and ratio products w.r.t. the two clipping methods, 
and the results are presented in Fig.~\ref{fig:smac_clippingseparately_ratiograd}.
Obviously clipping jointly yields more stable ratio product and policy gradients than clipping separately, implying that the performance discrepancy might be owing to the stability in the policy update.

\begin{figure}[h]
  \centering
  \includegraphics[width=0.8\textwidth]{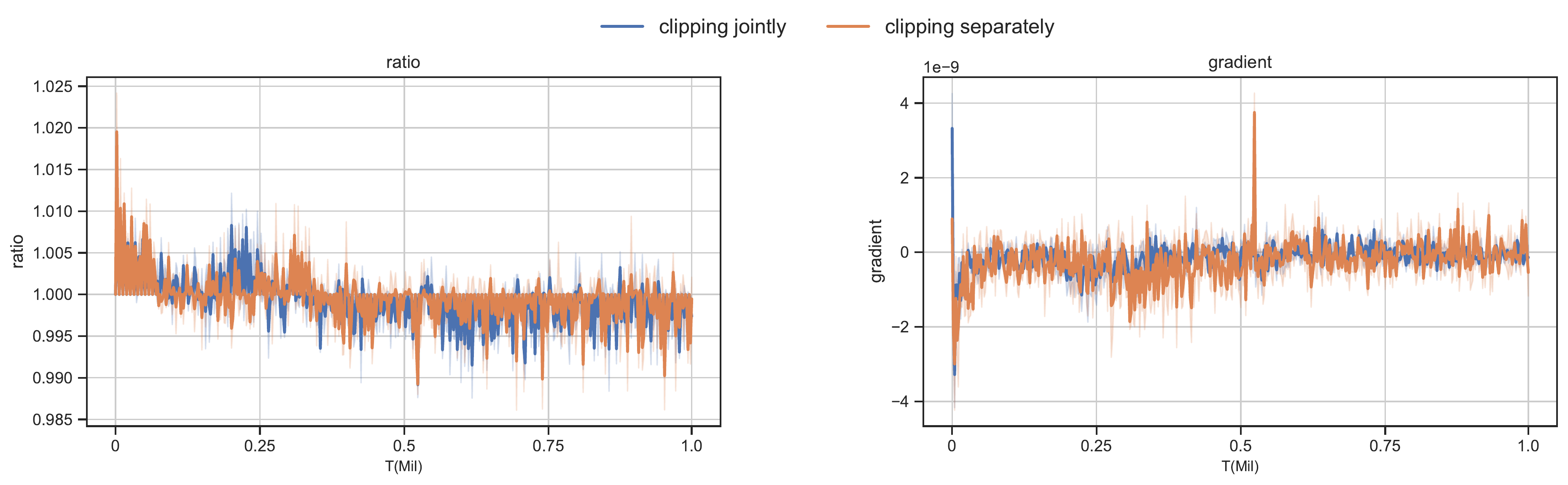}
  \caption{\footnotesize Comparison of two clipping methods on ratio product and mean policy gradients, evaluated on 3s\_vs\_3z.}
  \label{fig:smac_clippingseparately_ratiograd}
\end{figure}

\subsubsection{Results on three more maps of SMAC}
Some additional results for further verification of the effectiveness of CoPPO in SMAC are given in Fig.~\ref{fig:smac_additional}.
Note that CoPPO outperforms all the baselines in the maps we have evaluated on, except for the MMM map where CoPPO achieves competitive performance against MAPPO.
\begin{figure}[h]
  \centering
  \includegraphics[width=\textwidth]{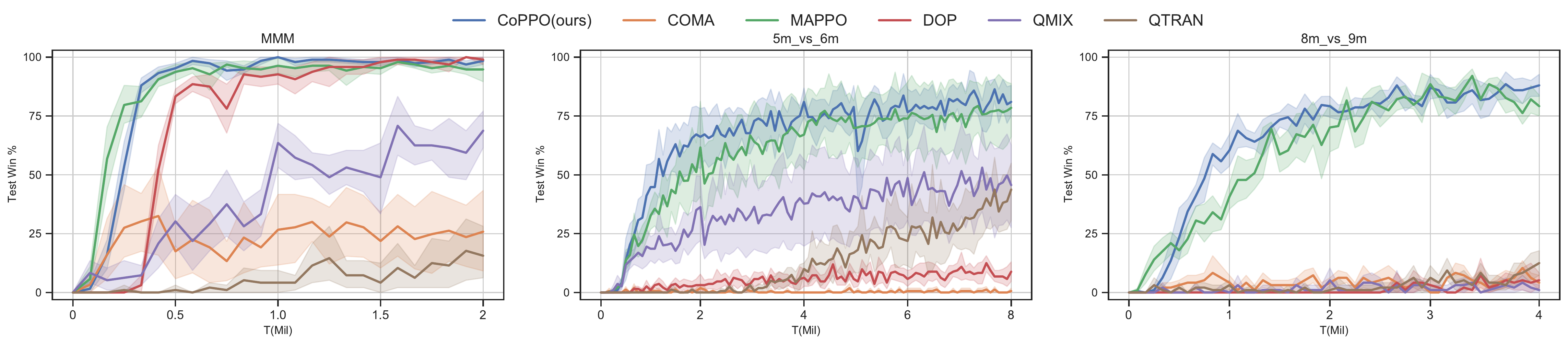}
  \caption{\footnotesize Additional results on SMAC.}
  \label{fig:smac_additional}
\end{figure}

\end{document}